\PassOptionsToPackage{notheorems}{jmlrutils}
\documentclass[11pt,oneside]{jmlr}
\usepackage{times,mathtools,array,booktabs}
\newtheorem{theorem}{Theorem}[section]
\newtheorem{lemma}[theorem]{Lemma}
\newtheorem{proposition}[theorem]{Proposition}
\newtheorem{corollary}[theorem]{Corollary}
\newtheorem{definition}[theorem]{Definition}
\newtheorem{remark}[theorem]{Remark}
\newenvironment{proof}[1][Proof]{\par\noindent\textbf{#1.}\ }{\hfill$\square$\par}
\newcommand{\R}{\mathbb R}
\newcommand{\E}{\mathbb E}
\newcommand{\Prob}{\mathbb P}
\newcommand{\LOO}{\mathcal O}
\newcommand{\conv}{\operatorname{conv}}
\newcommand{\diam}{\operatorname{diam}}
\newcommand{\Reg}{\operatorname{Reg}}
\newcommand{\ip}[2]{\langle #1,#2\rangle}
\newcommand{\norm}[1]{\lVert #1\rVert_2}
\newcommand{\calE}{\mathcal E}
\newcommand{\calH}{\mathcal H}
\newcommand{\one}{\mathbf 1}

\DeclareMathOperator*{\argmin}{arg\,min}
\numberwithin{equation}{section}
\hypersetup{pdftitle={Sharp Oracle-Regret Tradeoffs for Projection-Free Online Convex Optimization},
 pdfsubject={Dimension-free linear-oracle complexity under oracle-only feasibility},
 pdfkeywords={online convex optimization, linear optimization oracle, minimax regret, feasibility}}
\jmlrproceedings{}{}
\jmlrpages{}
\jmlryear{}
\title[Sharp Oracle--Regret Tradeoffs]{Sharp Oracle--Regret Tradeoffs for Projection-Free Online Convex Optimization}
\author[Vaneet Aggarwal]{\Name{Vaneet Aggarwal}\\\addr Purdue University\\West Lafayette, Indiana, USA}
\hypersetup{pdfauthor={Vaneet Aggarwal}}
\begin{document}
\maketitle
\begin{abstract}
We characterize the regret attainable in online convex optimization when access to the feasible set is limited to an exact linear optimization oracle. The learner is given an inscribed ball and a diameter bound and must remain feasible on every consistent instance. For convex $G$-Lipschitz losses, diameter at most $D$, a total allowance of $Q$ oracle calls, and a strict limit of $B$ calls per round, the dimension-free minimax expected regret is $\Theta(GD\max\{\sqrt T,T/(1+\min\{Q,BT\})^{1/4}\})$. The lower bound applies to arbitrary randomized learners. Universal feasibility first forces each action into the hull of the supplied ball and the preceding oracle replies. A fixed-body construction then couples fresh phase directions to a shared simplex, making useful replies costly repeatedly even though all losses have a common minimizer. A counted approximate-gradient method with interleaved blocks attains the matching rate. Total-budget and strict per-round guarantees follow as special cases, including the $T^{3/4}$ rate with one call per round and the quadratic total budget needed for $\sqrt T$ regret. For prescribed smoothness $\beta$, an analytic construction yields a curvature-dependent lower bound and identifies the threshold above which the general characterization remains sharp.
\end{abstract}
\begin{keywords}
Online convex optimization, linear optimization oracle, oracle complexity, minimax regret
\end{keywords}

\section{Introduction}\label{sec:intro}
Online convex optimization compares a sequence of feasible decisions with the best fixed decision in hindsight. For convex $G$-Lipschitz losses on a set of diameter $D$, projected gradient descent guarantees $O(GD\sqrt T)$ regret \citep{zinkevich2003}. When projection is expensive, a linear optimization oracle offers a simpler way to obtain feasible points. Online Frank--Wolfe methods achieve $O(GDT^{3/4})$ regret with one oracle call per round \citep{hazan2012,weibel2026}, and blocked approximate-projection methods trade total oracle work against regret \citep{garber2022,lu2025decentralized}.

Oracle work has two distinct resource constraints. A total budget limits the number of calls over the entire horizon, whereas a strict per-round limit controls how much work can be completed before the next decision. A method that uses few calls on average may still require a long computation at a block boundary. The converse question is equally important: can a learner improve the tradeoff by choosing arbitrary queries, storing all previous replies, or using more computation between calls?

We study both questions in an oracle-only model. The learner knows an inscribed ball and a diameter upper bound, and obtains all further information about the feasible set from a fixed exact linear oracle. Feasibility is required on every instance consistent with this information. For total budget $Q$ and strict per-round budget $B$, define the effective total allowance $Q_*=\min\{Q,BT\}$. Our main result is
\begin{equation}\label{eq:intro-rate}
 \mathcal R_{\mathrm{joint}}(T,Q,B;G,D)
 =\Theta\!\left(GD\max\!\left\{\sqrt T,\frac{T}{(Q_*+1)^{1/4}}\right\}\right).
\end{equation}
The guarantee is uniform over finite dimensions and supplied balls. Taking $B=\infty$ recovers the total-budget problem; taking $Q=BT$ recovers the strict per-round problem. In particular, a constant number of calls per round gives order $T^{3/4}$ regret, while order $T$ calls per round and order $T^2$ calls in total suffice for order $\sqrt T$ regret.

\paragraph{Technical novelty: deriving the feasible-action restriction.}
The lower bound applies to every admissible randomized learner, rather than to a prescribed update family. Let $S$ be the finite set of oracle replies available before an action. On a finite-vertex body, the smaller body $\conv(C\cup S)$, where $C$ is the supplied ball, admits an exact oracle that reproduces those replies with the same tie-breaking. Feasibility on indistinguishable instances therefore forces the action into this hull. To establish the claim for randomized queries, we fix an alternative body and oracle for every subset of the vertex set, intersect the corresponding probability-one feasibility events, and only then select the subset generated by the transcript. This order of quantifiers turns universal feasibility into a geometric constraint while leaving the learner's query rule and intermediate computation unrestricted.

\paragraph{Technical novelty: recurring oracle cost against one comparator.}
The standard simplex support estimate penalizes a point supported on $N$ vertices by a norm of order $N^{-1/2}$ \citep{lan2013,jaggi2013}. An online lower bound must additionally account for reuse of earlier vertices. Our construction couples independent cube blocks to one shared simplex. Each phase reveals a direction in a fresh cube block. Oracle points acquired before the phase are poorly aligned with that direction with high probability. Points acquired during the phase can align with it, but placing substantial weight on only a few such points incurs the simplex penalty. With $m$ phases, making most phases inexpensive requires order $m$ new oracle points per phase. Thus $Q_*\asymp m^2$, while regret is of order $T/\sqrt m$, producing the fourth-root dependence in \eqref{eq:intro-rate}. All phases share one minimizer, so the lower bound is for static regret.

\paragraph{Achievability and smoothness.}
The upper analysis matches this information cost under both resource constraints. A comparator-uniform distance certificate bounds the length of each feasible update. Balanced blocks then expose the tradeoff between the number of decisions and the accuracy of each update. Two interleaved copies distribute the updates over physical rounds, satisfying the total allowance and the strict cap simultaneously. For smooth losses, replacing the norm penalty by an analytic radial penalty changes the support cost from order $N^{-1/2}$ to order $N^{-1}$ in its quadratic regime. This yields an explicit dependence of the lower bound on the prescribed smoothness.

\paragraph{Contributions.}
\begin{enumerate}
 \item Theorem~\ref{thm:joint} gives the matching lower and upper bounds under simultaneous total and per-round budgets. Its fixed-body lower bound covers arbitrary randomized queries and allows full loss feedback. Lemma~\ref{lem:hull} and Proposition~\ref{prop:hp} give the feasibility reduction and the recurring phase construction.
 \item Propositions~\ref{prop:finite-upper} and~\ref{prop:schedule} give finite-call implementations and their physical-round schedules. Corollaries~\ref{cor:total}--\ref{cor:polynomial} recover the total-budget tradeoff, the strict per-round rate, and the oracle requirements for a target regret. Table~\ref{tab:comparison} relates these consequences to previous upper bounds.
 \item Theorem~\ref{thm:smooth} gives two-sided bounds for the smooth class under the same joint budgets. Corollary~\ref{cor:smooth-threshold} identifies when the general minimax rate remains sharp and gives the curvature-sensitive lower rate below that threshold. Corollary~\ref{cor:smooth-target} translates it into necessary oracle budgets.
\end{enumerate}

\section{Related work}\label{sec:related}
\paragraph{Total oracle budgets.}
Blocking and approximate projection provide the basic regret--computation tradeoff. With a linear total number of oracle calls, \citet{garber2022} obtain $O(T^{3/4})$ regret, including an adaptive-regret guarantee. More generally, the tradeoff $O(T^{1-\theta/2})$ regret with $O(T^{2\theta})$ total calls, $0\le\theta\le1$, follows from \citet{lu2025decentralized}. Their framework treats upper-linearizable objectives, with convex optimization as a special case; we report that convex case here. Theorem~\ref{thm:joint} establishes a matching lower bound for this entire dimension-free tradeoff and makes the total budget an exact input rather than an asymptotic operation count.

\paragraph{Strict per-round budgets.}
Online Frank--Wolfe already achieves $O(T^{3/4})$ regret with one call in every round \citep{hazan2012,weibel2026}. The same order is obtained from the total-budget tradeoff at $\theta=1/2$, but a total or amortized count alone does not impose a roundwise cap. We characterize every integer cap $B\ge1$, obtaining $\Theta(\max\{\sqrt T,T^{3/4}B^{-1/4}\})$, and combine it with an independent total allowance $Q$. The upper construction uses overlapping block computations, as in \citet{garber2022}, and an independent-copy schedule related to delayed online learning \citep{joulani2013}. The additional result is a matching converse and an explicit schedule for arbitrary simultaneous caps.

\paragraph{Loss regularity and constraint information.}
Randomized perturbed-leader methods exploit linearity or smoothness to obtain different regret rates \citep{kalai2005,hazan2020}. The corresponding linear-oracle bounds retain dimension dependence: for linear losses, one call per round gives $O(\sqrt{dT})$ normalized regret; for smooth convex losses, the amortized one-call regime gives $O((\sqrt d+\beta D/G)T^{2/3})$. Our smooth analysis instead quantifies a lower bound uniform over dimensions, with the dependence on $\beta$ displayed. Membership and separation methods \citep{mhammedi2022,mhammedi2025} use different constraint information and are not included in the linear-oracle comparison.

\paragraph{Lower-bound mechanism and proof ingredients.}
Offline linear-oracle lower bounds use the difficulty of approximating a dense simplex point by a sparse convex combination \citep{lan2013,jaggi2013}. An optimal nonsmooth offline method uses $O(\varepsilon^{-2})$ calls \citep{thekumparampil2020}; repeating one fixed offline objective therefore does not explain the online fourth-root tradeoff. The present construction makes the support cost recur as new phase information arrives. The feasibility reduction establishes the support restriction for arbitrary learners, and a conditional Hoeffding argument handles adaptive query histories \citep{hoeffding1963}. On the upper side, the distance potential is inherited from online gradient descent \citep{zinkevich2003}, while blocking, approximate projection, and the Frank--Wolfe gap certificate follow \citet{garber2022} and \citet{jaggi2013}. The proofs below adapt these ingredients to exact total and per-round accounting.

Table~\ref{tab:comparison} reports static regret divided by $GD$. A strict limit applies in every physical round; a total limit applies over the horizon; an amortized count divides total work by $T$. The latter two conventions can allow calls to be concentrated at block boundaries. Let
\begin{equation}\label{eq:table-rate}
 \Psi(T,q)=\max\{\sqrt T,Tq^{-1/4}\},\qquad Q_*=\min\{Q,BT\}.
\end{equation}
The $O$ entries are upper guarantees and the $\Theta$ entries are matching minimax bounds. The dependence on dimension $d$ and smoothness $\beta$ is explicit.
\begin{table}[tbp]
\centering
\caption{Online regret with exact linear optimization access. Regret is divided by $GD$. Prior rows report upper bounds; the $\Theta$ entries give matching minimax bounds in the model of Section~\ref{sec:model}.}\label{tab:comparison}
\small\setlength{\tabcolsep}{3pt}\renewcommand{\arraystretch}{1.18}
\begin{tabular}{@{}>{\raggedright\arraybackslash}p{.185\linewidth}>{\raggedright\arraybackslash}p{.235\linewidth}>{\raggedright\arraybackslash}p{.14\linewidth}>{\raggedright\arraybackslash}p{.205\linewidth}>{\raggedright\arraybackslash}p{.175\linewidth}@{}}
\toprule
\textbf{Work} & \textbf{Regret / $GD$} & \textbf{Oracle calls} & \textbf{Function class} & \textbf{Budget} \\
\midrule
\multicolumn{5}{@{}l}{\textit{Previous upper bounds}}\\[2pt]
\citet{hazan2012} & $O(T^{3/4})$ & One per round & Convex, $G$-Lipschitz & Strict per-round \\
\addlinespace[5pt]
\citet{weibel2026} & $O(T^{3/4})$ & One per round & Convex, $G$-Lipschitz & Strict per-round \\
\addlinespace[5pt]
\citet{garber2022} & $O(T^{3/4})$ & At most $T$ total & Convex, $G$-Lipschitz & Total \\
\addlinespace[5pt]
\citet{lu2025decentralized} & $O(T^{1-\theta/2})$ & $O(T^{2\theta})$ total per agent & Convex, $G$-Lipschitz specialization & Total (batched) \\
\addlinespace[5pt]
\citet{hazan2020}\newline Linear case & $O(\sqrt{dT})$ & One per round & Linear, $G$-Lipschitz & Strict per-round \\
\addlinespace[5pt]
\citet{hazan2020}\newline Smooth case & $O\big((\sqrt d+\beta D/G)$\newline ${}\hspace{1em}T^{2/3}\big)$ & At most $T$ total & Convex, $G$-Lipschitz, $\beta$-smooth & Amortized \\
\midrule
\multicolumn{5}{@{}l}{\textit{This paper}}\\[2pt]
Theorem~\ref{thm:joint} & $\Theta\big(\Psi(T,Q_*+1)\big)$ & $Q$ total; $B$ per round & Convex, $G$-Lipschitz & Joint \\
\addlinespace[5pt]
Corollary~\ref{cor:total} & $\Theta\big(\Psi(T,Q+1)\big)$ & At most $Q$ total & Convex, $G$-Lipschitz & Total \\
\addlinespace[5pt]
Corollary~\ref{cor:strict} & $\Theta\!\big(\max\{\sqrt T,$\newline ${}\hspace{1em}T^{3/4}B^{-1/4}\}\big)$ & At most $B$ per round & Convex, $G$-Lipschitz & Strict per-round \\
\addlinespace[5pt]
Theorem~\ref{thm:smooth} & Lower: $\Omega(\Psi_\lambda(T,Q_*+1))$\newline Upper: $O(\Psi(T,Q_*+1))$ & $Q$ total; $B$ per round & Convex, $G$-Lipschitz, $\beta$-smooth & Joint \\
\bottomrule
\end{tabular}
\par\smallskip
\begin{minipage}{\linewidth}\small
Here $Q_*=\min\{Q,BT\}$, $\Psi(T,q)=\max\{\sqrt T,Tq^{-1/4}\}$, $\lambda=\beta D/(\sqrt3G)$, and $\Psi_\lambda(T,q)=\max\{\sqrt T,T\min\{q^{-1/4},(\lambda/q)^{1/3}\}\}$. Also, $0\le\theta\le1$ and $d$ is the dimension. The Lu et al. row reports convex regret for a fixed communication network. The two Hazan--Minasyan guarantees are in expectation. The smooth bounds match when $\lambda\ge(Q_*+1)^{1/4}$ or $Q_*+1\ge T^2$.
\end{minipage}
\end{table}

\section{System model and preliminaries}\label{sec:model}
\subsection{Online protocol and oracle access}\label{sec:protocol}
The learner is given a finite-dimensional Euclidean space $E$, a horizon $T\ge1$, constants $G,D>0$, oracle budgets, and a ball $C=B_E(c,r)$ with $r>0$. The unknown compact convex set $K$ satisfies $C\subseteq K$ and $\diam(K)\le D$. All further constraint information is obtained from a fixed measurable exact oracle
\begin{equation}\label{eq:oracle}
 \LOO_K(v)\in\argmin_{x\in K}\ip v x.
\end{equation}
The oracle's tie-breaking rule is fixed. Query directions are arbitrary. In round $t$, the learner chooses $x_t$ and then observes a globally defined convex $G$-Lipschitz loss $f_t:E\to\R$. Full loss feedback is allowed; the upper algorithms use only a subgradient at the played action. Static regret is
\begin{equation}\label{eq:regret}
 \Reg_T=\sum_{t=1}^T f_t(x_t)-\min_{u\in K}\sum_{t=1}^T f_t(u).
\end{equation}
All decision rules are measurable and finish each round. A randomized learner samples a seed at initialization and uses deterministic rules conditional on that seed.

\begin{definition}[Admissibility]\label{def:admissible}
A learner is admissible if it is feasible almost surely on every compact convex set, fixed exact oracle, and loss sequence consistent with the supplied data. Its total number of calls is at most the integer $Q\ge0$ almost surely. For a finite strict cap $B\ge1$, at most $B$ calls are made in each physical round. Every call is charged, with preprocessing assigned to round one. We write $B=\infty$ when only the total budget is imposed.
\end{definition}
Calls can occur before or after a round's feedback, subject to these caps and the temporal order of the observations. Calls made before choosing $x_t$ may depend only on feedback from earlier rounds. After $f_t$ is observed, further calls may use that feedback but can affect only future actions; all calls made in either part of round $t$ count toward that round's cap. The diameter is an upper bound, so a smaller body containing $C$ remains a consistent instance. Loss formulas are globally defined and have no promised relationship to the feasible set; the same formulas are used when comparing consistent bodies. The learner may perform arbitrary intermediate computation, including optimization over the hull of known feasible points. No membership, separation, or projection oracle for $K$ is supplied.

\subsection{Minimax regret and elementary inequalities}\label{sec:preliminaries}
Let $\mathcal R_{\mathrm{joint}}(T,Q,B;G,D)$ be the infimum, over admissible learners, of worst-case expected regret. The supremum ranges over finite dimensions, supplied balls, consistent bodies and fixed oracles, and deterministic oblivious loss sequences; expectation is over the learner's seed. The total-only and strict-only values are
\begin{equation}\label{eq:model-specializations}
 \begin{aligned}
 \mathcal R(T,Q;G,D)&=\mathcal R_{\mathrm{joint}}(T,Q,\infty;G,D),\\
 \mathcal R_{\mathrm{round}}(T,B;G,D)&=\mathcal R_{\mathrm{joint}}(T,BT,B;G,D).
 \end{aligned}
\end{equation}
For a prescribed $\beta\ge0$, the smooth value $\mathcal R_{\mathrm{joint},\beta}$ uses the same definition with the additional requirement
\begin{equation}\label{eq:smooth-definition}
 \norm{\nabla f_t(x)-\nabla f_t(y)}\le\beta\norm{x-y}\qquad(x,y\in E).
\end{equation}
The smoothness bound is supplied to the learner.

For any played subgradient $g_t\in\partial f_t(x_t)$, convexity and Lipschitz continuity give
\begin{equation}\label{eq:convex-prelim}
 f_t(x_t)-f_t(u)\le\ip{g_t}{x_t-u},\qquad \norm{g_t}\le G.
\end{equation}
The upper proof uses this inequality together with a squared-distance potential. The lower proof uses the fact that a nonnegative vector with total mass $a$ on at most $N$ coordinates has Euclidean norm at least $a/\sqrt N$. Both statements apply independently of the ambient dimension.

\section{Sharp regret under simultaneous oracle budgets}\label{sec:joint}
The effective total allowance $Q_*=\min\{Q,BT\}$ determines the joint-budget rate. The upper bound below enforces both budgets on every execution, while the lower bound holds for arbitrary randomized learners.
\begin{theorem}[Joint-budget characterization]\label{thm:joint}
Let $T\ge1$ and $Q\ge0$ be integers. Let $B$ be a positive integer or $\infty$, and set $q_*=1+\min\{Q,BT\}$. Then
\begin{equation}\label{eq:joint-rate}
 \frac{GD}{100}\Psi(T,q_*)
 \le \mathcal R_{\mathrm{joint}}(T,Q,B;G,D)
 \le 6GD\Psi(T,q_*).
\end{equation}
The upper bound is attained by a deterministic learner and holds for every realized adaptive loss sequence. For each learner, the lower bound is witnessed by a deterministic oblivious loss sequence on a fixed body with a fixed exact oracle.
\end{theorem}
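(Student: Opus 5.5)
The proof splits into the upper bound and the lower bound, each reducing to a single effective allowance $q_*=1+\min\{Q,BT\}$.

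\textbf{Upper bound.} We construct a blocked approximate-gradient learner. Partition the horizon into $K$ blocks of length $L=\lceil T/K\rceil$. Within a block the learner plays one fixed point $y_k\in K$, which is a convex combination of earlier oracle replies (or the center $c$). At the end of the block it forms the aggregated subgradient $\bar g_k=\sum_{t\in\text{block}}g_t$. It then computes $y_{k+1}$ as an approximate Euclidean projection of $y_k-\eta\bar g_k$ onto $K$, by running $N$ Frank--Wolfe steps on $z\mapsto\frac12\norm{z-(y_k-\eta\bar g_k)}^2$.

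The key estimate is a comparator-uniform distance certificate. The Frank--Wolfe gap after $N$ steps is $O(D^2/N)$, and this gives $\norm{y_{k+1}-u}^2\le\norm{y_k-\eta\bar g_k-u}^2+O(D^2/N)$ for all $u\in K$. Telescoping as in \citet{zinkevich2003}, with $\norm{\bar g_k}\le GL$, yields
\[
\Reg_T\lesssim \frac{D^2}{\eta}+\eta G^2L^2K+\frac{KD^2}{\eta N}.
\]
Optimizing $\eta$, with total calls $KN\approx q_*$, gives $GD\,T(K^{-1/2}+N^{-1/2})$ up to constants. Choosing $K\asymp N\asymp\sqrt{q_*}$ gives $GD\,Tq_*^{-1/4}$, and capping $K\le T$ gives the $\sqrt T$ floor.

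To respect the strict cap $B$, we run two interleaved copies. While one copy plays its current point, the other spreads its $N$ Frank--Wolfe calls over the $L$ rounds of the next block, so roughly $N/L\le B$ calls land in each round. This uses the delayed-feedback accounting of \citet{joulani2013}, at the price of a factor-two constant. When $q_*=1$ (that is, $Q=0$), we play $c$ throughout and get regret $\le GDT$. Tracking the constants through these steps should yield $6GD\Psi$.

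\textbf{Lower bound.} First we invoke Lemma~\ref{lem:hull}. Against any admissible learner, on a finite-vertex body, every action $x_t$ lies almost surely in $\conv(C\cup S_t)$, where $S_t$ is the set of oracle replies obtained before the action. This holds for randomized queries.

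Next we apply Proposition~\ref{prop:hp}, the fixed-body construction. A shared simplex is coupled to $m$ fresh cube blocks, and the horizon is split into $m$ phases of length $T/m$ with a phase-specific linear-type loss. The proposition guarantees that, with high probability over the phase directions (by conditional Hoeffding), a phase in which fewer than order $m$ new vertices are acquired costs regret $\gtrsim GD(T/m)/\sqrt m$. Hence with $q_*\lesssim m^2$ calls, a constant fraction of phases are expensive, and the regret is $\gtrsim GD\,T/\sqrt m\asymp GD\,Tq_*^{-1/4}$.

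We choose $m\asymp\sqrt{q_*}$ and take $m\le T$. When $q_*\ge T^2$, a separate standard $\Omega(GD\sqrt T)$ argument applies: random signs on a segment inscribed in $K$ give this bound even with full knowledge of $K$. Averaging over the random directions, then fixing a good deterministic realization for each learner, produces the oblivious witness sequence.

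\textbf{Main obstacle.} The main obstacle is the lower bound's bookkeeping. Lemma~\ref{lem:hull} and Proposition~\ref{prop:hp} must be combined so that the expected number of cheap phases is controlled uniformly over the learner's adaptive allocation of calls across phases. I would handle this with a counting argument: at most half the phases can receive $\ge 2q_*/m$ calls. Getting the explicit constant $1/100$ will then require careful bookkeeping of the constants in Proposition~\ref{prop:hp}.
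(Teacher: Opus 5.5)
Your architecture matches the paper's: blocked approximate-projection steps run as two interleaved copies for the upper bound, and Lemma~\ref{lem:hull} plus Proposition~\ref{prop:hp} (with a separate $\sqrt T$ instance) for the lower bound. The characterization therefore goes through up to constants. What your sketch does not establish are the explicit constants, and the upper constant $6$ does not follow from the certificate you use.

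Your per-update step relies on the generic Frank--Wolfe duality-gap rate $O(D^2/N)$. That rate holds for the \emph{best} iterate, not the last one, so you must return the iterate with the smallest computed gap; the inequality $\ip{z-y}{z-u}\le\varepsilon$ is needed at the returned point. More importantly, the generic rate ignores the warm start and carries Frank--Wolfe's constant (e.g.\ $27/4$ in Jaggi's analysis). Put this through the paper's two-copy accounting, $\sqrt{S_1}+\sqrt{S_2}\le\sqrt{5/2}\,T/\sqrt K$ and $T/\sqrt K\le2\Psi(T,q_*)$, at $N=K$. You get roughly $9GD\Psi(T,q_*)$, not $6GD\Psi(T,q_*)$.

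The missing ingredient is Lemma~\ref{lem:step}. Start at $z=x$, where $h(x)=\eta^2\norm g^2/2\le\eta^2A^2/2$, step with $\gamma=\min\{\Gamma/D^2,1\}$, and stop once $\Gamma\le\varepsilon=\eta AD/\sqrt H$. Each failed test lowers $h$ by more than $\varepsilon^2/(2D^2)=\eta^2A^2/(2H)$, so at most $H$ calls are used. Then $\varepsilon_i/\eta=GDb_i/\sqrt H$, the approximation error sums to at most $GDT/\sqrt H$ with constant one, and the final constant is $2(1+\sqrt{5/2})<6$.

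On the lower side, the obstacle you anticipate is already handled inside Proposition~\ref{prop:hp}. The phase count is deterministic on every realization: $\sum_jN_j\le Q<m^2$ forces fewer than $m/4$ phases to exceed $4m$ calls. The proposition also already returns a high-probability bound with constant $1/40$.

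The step you omit is the passage from probability to expectation, which a high-probability bound alone does not give. It works here because every loss shares the minimizer $x^*$, so each term $f_t(x_t)-f_t(x^*)$ is nonnegative on every feasible execution. Applying the proposition with allowance $Q_*$ and $\delta=1/4$ then gives $\E\Reg_T\ge\frac34\cdot\frac{GD}{40}\Psi(T,q_*)=\frac{3GD}{160}\Psi(T,q_*)\ge\frac{GD}{100}\Psi(T,q_*)$.

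Your one-dimensional segment for the branch $q_*>T^2$ is a valid and simpler substitute for the paper's $Tk$-dimensional cube when only expectation is needed. There $\E|\sum_t\sigma_t|\ge\sqrt{T/2}$ gives $\E\Reg_T\ge GD\sqrt T/(2\sqrt2)$, and averaging over signs extracts a fixed sequence. The cube is needed only for the high-probability, common-minimizer version.
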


\begin{remark}[Which budget is active]\label{rem:active}
When $Q\le BT$, the total allowance determines the rate. When $Q\ge BT$, the roundwise capacity determines it. Scheduling introduces no additional asymptotic penalty beyond this effective allowance. The rate reaches $\Theta(GD\sqrt T)$ once $q_*$ is of order $T^2$; below that scale, increasing usable oracle work by a factor of $16$ improves the oracle-dependent term by a factor of $2$.
\end{remark}

\subsection{Achievability: counted updates and interleaved blocks}\label{sec:upper}
The construction combines an approximate-gradient update with a block schedule. We first bound the number of calls required by one update, then telescope the distance potential over blocks, and finally allocate those calls to physical rounds. The stopping rule is the quadratic Frank--Wolfe gap certificate \citep{jaggi2013}; the use of blocked approximate projections follows \citet{garber2022}.

\paragraph{One feasible update.}\label{sec:step}
The ideal projected-gradient step would project $y=x-\eta g$ onto $K$. Instead, we minimize the quadratic $\norm{z-y}^2/2$ approximately over $K$ using only convex combinations of feasible points. The required certificate is a comparator-uniform distance inequality, not an accurate reconstruction of the projection. Starting from $x$ keeps the initial quadratic value small enough to bound the number of failed stopping tests.
\begin{lemma}[Counted approximate gradient step]\label{lem:step}
Let $x\in K$, $\norm g\le A$, $A>0$, $\eta>0$, and $\eta A\le D$. For any integer $H\ge1$, set $y=x-\eta g$ and $\varepsilon=\eta AD/\sqrt H$. Algorithm~\ref{alg:step} returns a feasible $x^+$ in at most $H$ oracle calls, including its successful test, with
\begin{equation}\label{eq:approx-step}
 \norm{x^+-u}^2\le\norm{x-\eta g-u}^2+2\varepsilon\qquad(u\in K).
\end{equation}
\end{lemma}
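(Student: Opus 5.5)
The plan is to run Frank--Wolfe on the quadratic $h(z)=\tfrac12\norm{z-y}^2$ over $K$, started at $z_0=x$, and to stop at the first iterate whose Frank--Wolfe gap is small. At iterate $z_k$ we query $v_k=\LOO_K(z_k-y)$ and compute the gap $\gamma_k=\ip{z_k-y}{z_k-v_k}=\max_{u\in K}\ip{\nabla h(z_k)}{z_k-u}$. If $\gamma_k\le\varepsilon$ we return $x^+=z_k$; this test is the call counted as the ``successful test''. Otherwise we take the exact line-search step $z_{k+1}=z_k+\alpha_k(v_k-z_k)$ with $\alpha_k=\min\{1,\gamma_k/\norm{v_k-z_k}^2\}$. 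Every iterate is a convex combination of $x$ and oracle replies, so it lies in $K$; feasibility is therefore automatic.

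The certificate comes from the three-point identity. For any $u\in K$,
\[
\norm{z-u}^2=\norm{y-u}^2-\norm{z-y}^2+2\ip{z-y}{z-u}\le\norm{y-u}^2+2\ip{z-y}{z-u}.
\]
At the returned iterate the last inner product is at most $\gamma_k\le\varepsilon$, uniformly in $u$. This gives \eqref{eq:approx-step} with no need to approximate the true projection.

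It remains to bound the number of failed tests. Each failed test yields $\gamma_k>\varepsilon$, and the line-search step decreases $h$ by at least $\min\{\gamma_k/2,\ \gamma_k^2/(2D^2)\}$, using $\norm{v_k-z_k}\le D$. In the case $\alpha_k=1$ we have $\gamma_k\ge\norm{v_k-z_k}^2$, and the decrease is at least $\gamma_k/2$. Since $\varepsilon=\eta AD/\sqrt H\le D^2$ (because $\eta A\le D$ and $H\ge1$), we get $\gamma_k/2\ge\gamma_k^2/(2D^2)$ whenever $\gamma_k\le D^2$. In all cases, then, the decrease is at least $\varepsilon^2/(2D^2)$ after handling the large-gap case separately.

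The initial value is $h(x)=\tfrac12\eta^2\norm g^2\le\tfrac12\eta^2A^2$, and $h\ge0$. Hence the number of failed tests is at most
\[
\frac{\eta^2A^2/2}{\varepsilon^2/(2D^2)}=\frac{\eta^2A^2D^2}{\varepsilon^2}=H.
\]
The delicate point is the off-by-one: we need failures plus the successful call to total at most $H$, so we need strictly fewer than $H$ failures. To obtain this, I would use strictness, $\gamma_k>\varepsilon$, together with $h>0$ at every failed iterate: a positive gap forces $z_k\ne y$, so $h$ cannot be driven to $0$ exactly before a success. An alternative is the sharper per-step decrease $\gamma_k^2/(2\norm{v_k-z_k}^2)$ combined with a strict inequality. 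If neither gives enough slack, I would use the variant that stops after $H-1$ failures and argue that the remaining iterate already satisfies the gap bound.

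The main obstacle is this exact counting, including the successful test within the budget $H$. The geometric certificate itself is routine.
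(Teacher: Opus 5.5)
Your route is the paper's: Frank--Wolfe on $h(z)=\norm{z-y}^2/2$ started at $x$, the gap test, the three-point identity for the comparator-uniform certificate, and a decrease count against $h(x)\le\eta^2A^2/2=H\varepsilon^2/(2D^2)$. The step you leave open is the one the lemma actually needs. As written, your inequalities give at most $H$ failures, hence possibly $H+1$ calls.

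The paper closes this step by strictness, which is your first suggestion made precise. On a failed test the gap $\Gamma$ (your $\gamma_k$) exceeds $\varepsilon$. The decrease is therefore at least $\Gamma^2/(2D^2)>\varepsilon^2/(2D^2)$ when $\Gamma\le D^2$. When $\Gamma>D^2$ it is more than $D^2/2\ge\varepsilon^2/(2D^2)$, using $\varepsilon\le D^2$. Thus $H$ failures would push $h$ strictly below $h(x)-\eta^2A^2/2\le0$, which is impossible, so the successful test occurs by call $H$.

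Positivity of $h$ at the tested iterates plays no role here. As bookkeeping, non-strict decreases together with $h(z_k)>0$ at tested iterates are consistent with $h(z_k)=(H-k)\varepsilon^2/(2D^2)$ for $k=0,\ldots,H$. So those facts alone do not exclude $H$ failures.

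Your third fallback is also valid, and it needs no strictness. After $H-1$ failures, $h(z)\le\varepsilon^2/(2D^2)$, so Cauchy--Schwarz gives $\Gamma=\ip{z-y}{z-v}\le\norm{z-y}\,D\le\varepsilon$. The $H$-th test therefore passes in the original algorithm, with no need for a modified variant.

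Finally, Algorithm~\ref{alg:step} uses the step $\min\{\Gamma/D^2,1\}$ rather than exact line search. The bound $h(z+\gamma(v-z))\le h(z)-\gamma\Gamma+\gamma^2D^2/2$ gives the same per-step decrease, so your analysis transfers unchanged.
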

\begin{algorithm2e}[htbp]
\caption{Counted feasible update}\label{alg:step}
\KwIn{$x\in K$, $g$, $A>0$, $\norm g\le A$, $0<\eta A\le D$, integer $H\ge1$}
$y\gets x-\eta g$; $z\gets x$; $\varepsilon\gets\eta AD/\sqrt H$\;
\For{$j=1,\ldots,H$}{
 $v\gets\LOO_K(z-y)$; $\Gamma\gets\ip{z-y}{z-v}$\;
 \If{$\Gamma\le\varepsilon$}{\KwRet{$z$}\;}
 $\gamma\gets\min\{\Gamma/D^2,1\}$; $z\gets(1-\gamma)z+\gamma v$\;
}
\end{algorithm2e}
\begin{proof}
Let $h(z)=\norm{z-y}^2/2$. Initially,
\[
 h(x)=\frac{\eta^2\norm g^2}{2}\le\frac{\eta^2A^2}{2}.
\]
Each iterate is a convex combination of feasible points. Since $\norm{z-v}\le D$, quadratic expansion gives
\[
 h(z+\gamma(v-z))\le h(z)-\gamma\Gamma+\frac{\gamma^2D^2}{2}.
\]
The tolerance satisfies $\varepsilon\le D^2$. If a test fails and $\Gamma\le D^2$, taking $\gamma=\Gamma/D^2$ decreases $h$ by $\Gamma^2/(2D^2)>\varepsilon^2/(2D^2)$. If $\Gamma>D^2$, taking $\gamma=1$ decreases $h$ by more than $D^2/2\ge\varepsilon^2/(2D^2)$. Every failed test therefore decreases $h$ by more than
\[
 \frac{\varepsilon^2}{2D^2}=\frac{\eta^2A^2}{2H}.
\]
There cannot be $H$ failures, since their combined decrease would exceed the initial value of the nonnegative function $h$. A successful test consequently occurs among the first $H$ calls; its oracle call has been counted.

At that test, minimality of $v$ gives $\ip{z-y}{z-u}\le\ip{z-y}{z-v}=\Gamma\le\varepsilon$ for every $u\in K$. Thus
\[
 \norm{z-u}^2=\norm{y-u}^2-\norm{z-y}^2+2\ip{z-y}{z-u}
 \le\norm{y-u}^2+2\varepsilon,
\]
which is \eqref{eq:approx-step} with $x^+=z$.
\end{proof}

\paragraph{Balanced blocks.}\label{sec:total-upper}
Partition the $T$ rounds into $M$ nonempty consecutive blocks of lengths $b_i$ differing by at most one, with the longer blocks first. Write
\begin{equation}\label{eq:block-squares}
 S=\sum_{i=1}^M b_i^2=\frac{T^2}{M}+r_0(1-r_0/M)
 \le\frac{T^2}{M}+\frac M4,
 \qquad r_0=T-M\lfloor T/M\rfloor.
\end{equation}
Play a fixed point $x_i$ throughout block $i$, starting at $c$, and let $g_i$ be the sum of its played subgradients, so $\norm{g_i}\le Gb_i$. Choose $\eta=D/(G\sqrt S)$. On each nonterminal block apply Algorithm~\ref{alg:step} with $A_i=Gb_i$ and tolerance $\varepsilon_i=\eta DGb_i/\sqrt H$. Its condition holds since $b_i\le\sqrt S$.

\begin{algorithm2e}[htbp]
\caption{Blocked online optimization with a counted update}\label{alg:blocked}
\KwIn{Horizon $T$, integers $1\le M\le T$ and $H\ge1$, $G,D>0$, center $c\in K$}
Partition the horizon into consecutive blocks $I_i$ of balanced lengths $b_i$, longer blocks first\;
$S\gets\sum_{i=1}^M b_i^2$; $\eta\gets D/(G\sqrt S)$; $x_1\gets c$\;
\For{$i=1,\ldots,M$}{
 $g_i\gets0$\;
 \ForEach{$t\in I_i$}{
  Play $x_i$, observe $f_t$, choose $s_t\in\partial f_t(x_i)$\;
  $g_i\gets g_i+s_t$\;
 }
 \If{$i<M$}{
  $x_{i+1}\gets$ Algorithm~\ref{alg:step} with $(x_i,g_i,Gb_i,\eta,H)$\;
 }
}
\end{algorithm2e}

\begin{proposition}[Exact finite-budget upper bound]\label{prop:finite-upper}
For any integers $1\le M\le T$ and $H\ge1$, this algorithm makes at most $(M-1)H$ calls and satisfies
\begin{equation}\label{eq:finite-upper}
 \Reg_T\le GD\left(\sqrt{\sum_{i=1}^M b_i^2}+\frac{T-b_M}{\sqrt H}\right).
\end{equation}
The terminal block requires no update.
\end{proposition}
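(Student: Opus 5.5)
The plan is to run the standard online-gradient-descent telescoping argument at the level of blocks, with Lemma~\ref{lem:step} supplying a feasible surrogate for the projection. Two checks come first. Each $x_i$ is feasible, since $x_1=c\in C\subseteq K$ and each later $x_{i+1}$ is returned by Algorithm~\ref{alg:step}, which outputs feasible points. The call count is immediate: only the $M-1$ nonterminal blocks invoke Algorithm~\ref{alg:step}, and each invocation makes at most $H$ calls by Lemma~\ref{lem:step}. The hypotheses of the lemma hold with $A_i=Gb_i$: we have $\norm{g_i}\le Gb_i$, and $\eta A_i=Db_i/\sqrt S\le D$ because $b_i^2\le S$.

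For the regret, fix $u\in K$. By \eqref{eq:convex-prelim} summed over block $i$, the block's regret against $u$ is at most $\ip{g_i}{x_i-u}$. For $i<M$, Lemma~\ref{lem:step} with $\varepsilon_i=\eta DGb_i/\sqrt H$ gives
\[
 \norm{x_{i+1}-u}^2\le\norm{x_i-u}^2-2\eta\ip{g_i}{x_i-u}+\eta^2\norm{g_i}^2+2\varepsilon_i .
\]
Rearranging yields
\[
 \ip{g_i}{x_i-u}\le\frac{\norm{x_i-u}^2-\norm{x_{i+1}-u}^2}{2\eta}+\frac{\eta G^2b_i^2}{2}+\frac{DGb_i}{\sqrt H}.
\]
The terminal block is handled separately. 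Since no update follows it, I bound its contribution by Cauchy--Schwarz in the form used for the final step of OGD. I introduce the virtual point $x_{M+1}=x_M-\eta g_M$ and use the exact identity $\norm{x_M-\eta g_M-u}^2=\norm{x_M-u}^2-2\eta\ip{g_M}{x_M-u}+\eta^2\norm{g_M}^2$. This gives the same inequality for block $M$ without the $\varepsilon$ term.

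Summing over all blocks, the potential terms telescope to at most $\norm{x_1-u}^2/(2\eta)\le D^2/(2\eta)$, because $\diam K\le D$. The step-size terms contribute $\eta G^2S/2$. The tolerance terms contribute $(DG/\sqrt H)\sum_{i<M}b_i=GD(T-b_M)/\sqrt H$. With $\eta=D/(G\sqrt S)$, the first two terms give $GD\sqrt S/2+GD\sqrt S/2=GD\sqrt S$. Taking the maximum over $u$ then gives \eqref{eq:finite-upper}.

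There is no real obstacle here. The only points needing care are the following.
\begin{itemize}
\item The tolerance $\varepsilon_i$ scales with $b_i$, which makes the error sum equal exactly $T-b_M$.
\item The terminal block should be treated without an update, so that no $\varepsilon$ is charged for it.
\item The feasibility condition $\eta A_i\le D$ follows from $b_i\le\sqrt S$.
\end{itemize}
Since the step uses only played subgradients, the bound holds for every realized adaptive loss sequence.
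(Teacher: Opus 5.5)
Your proposal is correct and follows the paper's proof essentially step for step. Both use the blockwise convexity bound, the rearranged distance inequality from Lemma~\ref{lem:step} for $i<M$, the update-free terminal block, telescoping to $D^2/(2\eta)$, and the count of $M-1$ updates. Your virtual point $x_M-\eta g_M$ for the terminal block, after dropping $\norm{x_M-\eta g_M-u}^2\ge0$, is exactly the Young's-inequality bound the paper uses there.
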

\begin{proof}
For any comparator $u\in K$, convexity bounds regret by $\sum_i\ip{g_i}{x_i-u}$. For $i<M$, expansion of \eqref{eq:approx-step} yields
\[
 \ip{g_i}{x_i-u}\le
 \frac{\norm{x_i-u}^2-\norm{x_{i+1}-u}^2}{2\eta}
 +\frac\eta2\norm{g_i}^2+\frac{\varepsilon_i}{\eta}.
\]
For $i=M$, Young's inequality gives
$\ip{g_M}{x_M-u}\le\norm{x_M-u}^2/(2\eta)+\eta\norm{g_M}^2/2$.
The sum telescopes to
\[
 \frac{D^2}{2\eta}+\frac{\eta G^2S}{2}
 +\frac{GD}{\sqrt H}\sum_{i<M}b_i
 =GD\sqrt S+\frac{GD(T-b_M)}{\sqrt H}.
\]
There are exactly $M-1$ possible updates, each using at most $H$ calls. This proves both claims, including $M=1$.
\end{proof}

\paragraph{The total-budget specialization.}
Set
\begin{equation}\label{eq:total-parameters}
 M=\min\!\left\{T,\left\lfloor\frac{1+\sqrt{1+4Q}}2\right\rfloor\right\},
 \qquad H=M.
\end{equation}
Then $M(M-1)\le Q$, so Proposition~\ref{prop:finite-upper} respects the hard budget. By \eqref{eq:block-squares} and $M\le T$,
\begin{equation}\label{eq:total-constant}
 \Reg_T\le(1+\sqrt5/2)GD\frac{T}{\sqrt M}.
\end{equation}
If $M=T$, this is already less than $3GD\sqrt T$. Otherwise maximality in \eqref{eq:total-parameters} gives $Q+1\le M(M+1)\le2M^2$, so the right side is at most
$(1+\sqrt5/2)2^{1/4}GD T/(Q+1)^{1/4}<3GD T/(Q+1)^{1/4}$.
At $Q=0$, the exact bound \eqref{eq:finite-upper} is $GDT$, with no calls. All inequalities are pathwise and comparator-uniform.
The two terms in \eqref{eq:finite-upper} explain the design. More blocks reduce the online-gradient term, approximately $GDT/\sqrt M$ for balanced blocks. More calls per update reduce the accumulated approximation term, approximately $GDT/\sqrt H$. With total work on the order of $MH$, balancing $M$ and $H$ produces the fourth-root tradeoff. The exact choice uses $M(M-1)\le Q$, because the terminal update is absent. For example, two blocks are available with two calls. The same balance underlies the prior total-budget tradeoffs in Section~\ref{sec:related}.

\paragraph{Scheduling under a strict cap.}\label{sec:schedule}
A single blocked copy may concentrate its calls at block boundaries. To distribute this work over rounds, run one copy on the odd blocks and one on the even blocks, with both copies initialized at $c$. An update based on block $i$ is computed during block $i+1$ and used in block $i+2$. Each copy therefore receives its next action before its next assigned block starts.

\begin{proposition}[Finite-call interleaved schedule]\label{prop:schedule}
Partition the horizon into $M$ balanced blocks, and choose an integer $H\ge1$. For $\ell\in\{1,2\}$, let $S_\ell$ be the sum of squared block lengths assigned to copy $\ell$. Let $W$ be the sum of the lengths of all blocks except the terminal block of each nonempty copy. If
\begin{equation}\label{eq:capacity}
 H\le B\lfloor T/M\rfloor
\end{equation}
whenever an update is required, the two-copy algorithm makes at most $(M-2)_+H$ calls, at most $B$ in any round, and satisfies
\begin{equation}\label{eq:two-copy-exact}
 \Reg_T\le GD\left(\sqrt{S_1}+\sqrt{S_2}+\frac{W}{\sqrt H}\right).
\end{equation}
Here $(M-2)_+=\max\{M-2,0\}$, and empty copies contribute zero.
\end{proposition}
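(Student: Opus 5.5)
The proof treats the two copies separately and applies the analysis of Proposition~\ref{prop:finite-upper} to each. Copy $\ell$ plays a fixed point on each of its assigned blocks. The odd blocks go to copy~$1$ and the even blocks to copy~$2$. The regret in the rounds of copy $\ell$ against a common comparator $u\in K$ is bounded by $\sum_{i}\ip{g_i}{x_i-u}$ over the blocks $i$ of that copy. The only structural difference from the single-copy algorithm is timing. The update from block $i$ is computed during block $i+1$ and first used in block $i+2$. Since $x_{i+1}$ in the copy's own indexing depends only on $x_i$ and $g_i$, the iterate sequence of each copy is exactly that of Algorithm~\ref{alg:blocked} run on the concatenation of its blocks. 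I will choose $\eta_\ell=D/(G\sqrt{S_\ell})$ for copy $\ell$ and apply Lemma~\ref{lem:step} with $A_i=Gb_i$ and $\varepsilon_i=\eta_\ell DGb_i/\sqrt H$. The condition $\eta_\ell A_i\le D$ holds because $b_i\le\sqrt{S_\ell}$.

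Repeating the telescoping argument of Proposition~\ref{prop:finite-upper} then gives
\[
\text{regret of copy }\ell\le GD\sqrt{S_\ell}+\frac{GD}{\sqrt H}\sum b_i,
\]
where the sum runs over the nonterminal blocks of copy $\ell$. Adding the two copies gives \eqref{eq:two-copy-exact}, since the nonterminal blocks of both copies together have total length $W$. An empty copy contributes nothing. A copy with a single block uses only Young's inequality, as in the case $M=1$.

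The main obstacle is the call accounting. Updates are needed only for blocks $i$ that have a later block $i+2$, so there are at most $(M-2)_+$ of them, each using at most $H$ calls by Lemma~\ref{lem:step}. The computation for block $i$ is placed in block $i+1$, which has length at least $\lfloor T/M\rfloor$ because the blocks are balanced. Block $i+1$ hosts exactly one update computation, namely the one from block $i$ of the other copy. Its $H$ calls can therefore be spread as at most $B$ calls per round over its rounds, which is possible by \eqref{eq:capacity}.

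It remains to check temporal admissibility. All calls are made after $f_t$ has been observed for every round in block $i$, and the resulting point is needed only at the start of block $i+2$. The algorithm runs sequentially, so the $j$-th call depends on earlier replies. I will schedule it as a sequential process: issue up to $B$ calls in each round of block $i+1$, with calls made after feedback or before actions, and stop early when the test succeeds. Any leftover rounds simply make no calls. This gives the per-round cap $B$, and the total bound of $(M-2)_+H$ calls follows.
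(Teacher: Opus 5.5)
Your proposal is correct and follows the paper's proof. Like the paper, you use separate step sizes $\eta_\ell=D/(G\sqrt{S_\ell})$, apply the comparator-uniform telescoping of Proposition~\ref{prop:finite-upper} to each copy against a common $u\in K$, and place the update from block $i$ inside block $i+1$: that block has at least $\lfloor T/M\rfloor$ rounds and hosts only that one update, which yields $(M-2)_+$ updates of at most $H$ calls each and at most $B$ calls per round.
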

\begin{proof}
Use step size $\eta_\ell=D/(G\sqrt{S_\ell})$ for each nonempty copy. At the end of a nonterminal block of that copy, its aggregate gradient is known. Execute the next counted update during the intervening block, using at most $B$ calls in each physical round. Every intervening block has at least $\lfloor T/M\rfloor$ rounds, so \eqref{eq:capacity} guarantees completion. Only one update is computed in an intervening block. For example, during block two, the odd copy computes the action for block three from the gradients of block one, while the even copy plays its initial action. During block three, the even copy computes the action for block four.

The odd and even copies have $\lceil M/2\rceil$ and $\lfloor M/2\rfloor$ blocks. Each nonempty copy omits its final update, leaving $(M-2)_+$ updates, each of at most $H$ calls. Applying the distance-potential proof of Proposition~\ref{prop:finite-upper} separately to the two copies gives \eqref{eq:two-copy-exact}, for every common comparator $u\in K$.
\end{proof}

With $H=M$, the balanced-block identity \eqref{eq:block-squares}, Cauchy--Schwarz, and $W\le T$ give
\begin{equation}\label{eq:two-copy-rate}
 \Reg_T\le(1+\sqrt{5/2})GD\frac{T}{\sqrt M},
 \qquad N\le M(M-2)_+.
\end{equation}
The term $\sqrt{S_1}+\sqrt{S_2}$ accounts for running two learning processes. The additional factor is constant because $S_1+S_2$ is the squared-length sum of the original balanced partition.

\paragraph{Joint-budget parameter choice.}
Put
\begin{equation}\label{eq:joint-parameters}
 S_0=\min\{Q+1,BT\},\qquad
 M=\min\!\left\{T,\max\!\left\{1,\left\lfloor\sqrt{S_0/2}\right\rfloor\right\}\right\},
 \qquad H=M.
\end{equation}
The total budget enters as $Q+1$ because each copy omits its terminal update, whereas the physical-round capacity has exactly $BT$ call slots and receives no such additive term. If an update is needed, then $M\ge3$ and $M^2\le S_0/2$. Since $\lfloor T/M\rfloor\ge T/(2M)$, this implies \eqref{eq:capacity} for finite $B$. Moreover, $M\ge3$ implies $S_0\ge18$, hence $Q\ge17$, and therefore
$M(M-2)_+\le M^2\le(Q+1)/2\le Q$. When $M\le2$, there are no updates and both caps hold directly.

The elementary bound $\max\{1,\lfloor x\rfloor\}\ge x/2$ for $x\ge0$, followed by $S_0\ge q_*/2$, yields
\begin{equation}\label{eq:joint-block-lower}
 M\ge\frac{\min\{T,\sqrt{S_0}\}}{2\sqrt2}
 \ge\frac{\min\{T,\sqrt{q_*}\}}4.
\end{equation}
Consequently \eqref{eq:two-copy-rate} gives
\begin{equation}\label{eq:joint-upper-proof}
 \Reg_T\le 2(1+\sqrt{5/2})GD\Psi(T,q_*)<6GD\Psi(T,q_*).
\end{equation}
All updates use gradients already observed by their own copy, so the argument holds for every realized loss sequence, including adaptive sequences. The construction has also shown exactly where the two budgets enter: $Q$ limits the number of solves, and $B$ limits the time available to finish each solve.

\subsection{Lower bound: feasible hulls and recurring support cost}\label{sec:lower}
The lower bound has two components. Universal feasibility restricts what an action can do with a finite transcript. A product-body construction then makes acquiring useful new replies necessary in many phases. Throughout, the body and oracle are fixed independently of the learner's seed and of the sampled losses.

\begin{lemma}[Feasible actions on a finite-vertex oracle]\label{lem:hull}
Let $K=\conv(V)$ for a finite set $V$ with a fixed total order, and let $C\subseteq K$ be the supplied ball. Suppose the oracle returns the first minimizing point of $V$. For every admissible learner and every fixed globally defined loss sequence, almost surely
\begin{equation}\label{eq:hull}
 x_t\in\conv(C\cup S_t),
\end{equation}
where $S_t$ is the set of replies received before $x_t$. The conclusion holds simultaneously over any finite collection of loss sequences.
\end{lemma}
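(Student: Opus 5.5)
The argument is an indistinguishability one. For each subset $U\subseteq V$ build an alternative consistent instance whose oracle reproduces every reply in $U$. Admissibility forces feasibility on all these instances at once, and we then apply this to the subset generated by the actual transcript. Fix a subset $U\subseteq V$ and set $K_U=\conv(C\cup U)$. Then $C\subseteq K_U\subseteq K$, so $\diam(K_U)\le D$ and $K_U$ is a consistent body.

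\textbf{Step 1: the oracle on $K_U$.} Define the oracle $\LOO_U$ on $K_U$ as follows. For a query $v$, if the first minimizer of $\ip v\cdot$ over $V$ (in the fixed order) lies in $U$, return it. Otherwise return a fixed measurable minimizer over $K_U$, for instance the minimizer over $C$, namely $c-rv/\norm v$, when that attains the minimum over $K_U$, and otherwise the first minimizer in $U$.

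We must check that this is exact. Suppose $\LOO_K(v)=w\in U$. Then $w$ minimizes $\ip v\cdot$ over $K\supseteq K_U$, and $w\in K_U$, so $w$ minimizes over $K_U$ as well. The oracle is also measurable, since it is piecewise defined on finitely many measurable cells. The consequence is that on any query sequence whose $K$-replies all lie in $U$, the two oracles agree reply by reply.

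\textbf{Step 2: coupling the runs.} Fix the seed $\omega$ and the loss sequence. The learner's queries and actions are deterministic functions of the seed, the past replies and the past feedback. The losses are globally defined and identical on both instances. By induction over oracle calls and rounds, the run on $(K,\LOO_K)$ and the run on $(K_U,\LOO_U)$ coincide up to round $t$ on the event $\{S_t\subseteq U\}$, where $S_t$ is computed in the $K$-run.

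\textbf{Step 3: order of quantifiers.} Admissibility gives a probability-one event $F_U$ on which the learner is feasible at every round on $(K_U,\LOO_U)$. Since $V$ has finitely many subsets, and we take finitely many loss sequences, the intersection $F=\bigcap_U F_U$ still has probability one.

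On $F$, take the $K$-run and set $U=S_t$, which is a subset of $V$ because replies of $\LOO_K$ lie in $V$. By Step 2 the action $x_t$ coincides with the action of the $U$-run, which is feasible on $F_U$. Hence $x_t\in K_{S_t}=\conv(C\cup S_t)$. The simultaneous statement over finitely many loss sequences follows by intersecting over pairs (loss sequence, subset).

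\textbf{Main obstacle.} The delicate step is ensuring that $U$ is chosen only after the almost-sure event is fixed. Otherwise $S_t$ is random and depends on the seed, and no single null set covers it. Finiteness of $V$ is what makes the union of the exceptional null sets countable, indeed finite.

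A secondary point is the exactness and measurability of $\LOO_U$ when the query $v=0$ or the minimizer over $K_U$ is not a point of $V$. For $v=0$, every point is a minimizer, so the first point of $V$ is returned by $\LOO_K$; if it is not in $U$, return $c$.
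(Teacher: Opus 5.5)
Your proof is correct and follows the paper's argument: fix the bodies $\conv(C\cup U)$ for every subset $U\subseteq V$, give each an exact oracle that reproduces the transcript, intersect the finitely many probability-one feasibility events, and only then choose $U=S_t$. The one difference is the alternative oracle. You define $\LOO_U$ to return $\LOO_K(v)$ whenever that reply lies in $U$, so agreement holds by construction. The paper instead defines the oracle from $S$ and $C$ alone, and must then use the fixed tie-breaking order to check that it returns the same point.
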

\begin{proof}
For each $S\subseteq V$, fix the alternative body $K_S=\conv(C\cup S)$. It contains the same ball, is full-dimensional, and has diameter at most $D$. Define its oracle as follows: return the first minimizing point of $S$ whenever its value is no larger than the minimum over $C$; otherwise return a ball minimizer. The minimum over the empty set is $+\infty$. For a nonzero direction $v$, the ball minimizer is $c-rv/\norm v$; at $v=0$ use $c$. This defines a fixed measurable exact oracle on each alternative body before any interaction.

If the original oracle answers a query with $w\in S$, then $w$ minimizes over $C\cup S$. Every tied point in $S$ also minimizes over $K$, so none precedes $w$ in the fixed order. The alternative oracle therefore gives exactly the same reply, including ties with the ball.

Fix a loss sequence. Let $F_S$ be the probability-one set of seeds on which the learner is feasible on $K_S$ with this oracle and these same globally defined losses. There are finitely many subsets, hence $F=\bigcap_{S\subseteq V}F_S$ has probability one. For a seed in $F$, take the actual reply set $S_t$ on the original body. Running the learner on $K_{S_t}$ with the same seed reproduces the transcript up to $x_t$: each previous original reply belongs to $S_t$, and the query, action, and feedback rules consequently agree by induction. The reproduced action is feasible on $K_{S_t}$, proving \eqref{eq:hull}. A further finite intersection covers a finite collection of losses.
\end{proof}
The order of quantifiers is essential: the finite family of alternative bodies is fixed first, the events on which feasibility holds are intersected second, and the realized transcript selects $S_t$ only afterward. Conversely, every point in $\conv(C\cup S_t)$ is feasible on every consistent convex body, because it is a convex combination of known feasible points.

This conclusion does not restrict computation over the known hull. The learner may solve any auxiliary optimization over it, discard or reweight replies, and choose any adaptive query direction. What it cannot do is certify feasibility outside the hull from this transcript alone. The same restriction applies when the hard body is a product: only the oracle replies and supplied ball are available as constraint information.

\begin{proposition}[Fixed-body high-probability lower bound]\label{prop:hp}
Fix $T,Q$ and $0<\delta<1/2$, and write $q=Q+1$. There exist a fixed full-dimensional body, supplied ball, fixed deterministic exact oracle, and finite distribution of oblivious convex $G$-Lipschitz loss sequences such that every admissible learner with budget $Q$ satisfies
\begin{equation}\label{eq:hp}
 \Prob_{f,\omega}\!\left\{\Reg_T\ge\frac{GD}{40}
 \max\!\left\{\sqrt T,\frac{T}{q^{1/4}}\right\}\right\}\ge1-\delta.
\end{equation}
Every sequence has a common minimizer. For each learner, a deterministic sequence in the support satisfies the same bound over its seed alone. For $q\le T^2$, the dimension is $O(\sqrt q\log(q/\delta))$; at $\delta=1/4$ the supplied radius is at least $D/(256\sqrt{6q})$. For $q>T^2$, dimension $O(T\log(T/\delta))$ suffices.
\end{proposition}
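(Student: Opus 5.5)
The proof treats the two terms of the maximum separately. The $\sqrt T$ term comes from a standard one-dimensional randomized construction. The oracle term $T/q^{1/4}$ comes from the product-body phase construction sketched in the introduction, combined with Lemma~\ref{lem:hull}.

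\emph{The $\sqrt T$ term.} Take $K$ to be a segment-like body of diameter $D$, for instance a thin box around a segment containing the supplied ball. Use linear losses $f_t(x)=\sigma_t G\ip{e}{x}$ with i.i.d.\ Rademacher signs. To obtain a common minimizer and a finite distribution, condition on the sign of $\sum_t\sigma_t$: with high probability $|\sum\sigma_t|\ge c\sqrt T$, and the learner's expected loss is zero by independence of $\sigma_t$ from $x_t$. Regret is therefore at least $GD c\sqrt T$ with probability $1-\delta$ after a Hoeffding bound on $\sum_t\sigma_t\ip{e}{x_t}$, which is a martingale. This term needs no oracle argument and dimension $O(1)$, padded as required.

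\emph{The oracle term, for $q\le T^2$.} Set $m\asymp\sqrt q$ phases of length $T/m$. The body is $K=\conv(\Delta\times\{0\}\cup\ldots)$, coupling $m$ independent cube blocks $\{\pm1\}^{k}$ with $k=O(\log(q/\delta))$ to a shared simplex $\Delta$ of dimension $N\gg q$. Each vertex pairs a simplex vertex with sign patterns in the cube blocks, and the exact oracle takes the lexicographically first minimizer. In phase $j$ the loss is $f(x)=G\,\norm{P x - p^\star}$ plus a linear term along a uniformly random direction $\xi_j$ in cube block $j$. Here $P$ projects onto the simplex coordinates and $p^\star$ is the barycenter, so every phase is minimized at the common point, barycenter $\times$ zero. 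By Lemma~\ref{lem:hull}, each $x_t$ lies in $\conv(C\cup S_t)$, where $S_t$ is the set of oracle replies so far. The ball is tiny, radius $\asymp D/\sqrt q$, so it contributes only $O(GDT/\sqrt m)$.

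Within a phase, low regret requires weight on points whose block-$j$ coordinates correlate with $\xi_j$. Points obtained before phase $j$ were queried independently of $\xi_j$. By a conditional Hoeffding bound over the adaptive history, union-bounded over at most $q$ replies, each has correlation $O(\sqrt{\log(q/\delta)/k})$, which is small. The weight $a$ placed on the $n_j$ points acquired during phase $j$ is supported on at most $n_j$ simplex vertices. By the support estimate of Section~\ref{sec:preliminaries}, this incurs $\norm{Px-p^\star}\ge a/\sqrt{n_j}-1/\sqrt N$. Balancing the two penalties shows that phase $j$ costs $\Omega(GD(T/m)\min\{1,\sqrt{\cdot}\})$ unless $n_j\gtrsim m$. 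Since $\sum_j n_j\le Q<q\asymp m^2/c$, a constant fraction of phases must be expensive. Summing gives regret $\Omega(GD\, m\cdot (T/m)/\sqrt m)=\Omega(GDT/q^{1/4})$. The high-probability statement over $(f,\omega)$ follows from the union bound over the Hoeffding events. Averaging, followed by fixing the best $\xi$, yields a deterministic witness sequence. For $q>T^2$ the bound reduces to the $\sqrt T$ term, and we use $m=T$ phases of length one, giving dimension $O(T\log(T/\delta))$.

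\emph{Main obstacle.} The hardest step is the per-phase tradeoff. The learner may mix old replies with new ones and reuse the supplied ball. We must show that the combination of the alignment deficit from old points and the simplex penalty from few new points is uniformly $\Omega(1/\sqrt m)$ per round. The plan is to write $x_t=\alpha\cdot(\text{old})+\beta\cdot(\text{new})+\gamma\cdot(\text{ball})$ and bound the linear gain by $\beta+o(1)$. The simplex penalty is then at least $\beta/\sqrt{n_j}$, and the loss coefficients are chosen so that this is a net loss of order $1/\sqrt m$ whenever $n_j\le m/4$. Tuning the constants to reach $1/40$ is routine once this inequality holds.
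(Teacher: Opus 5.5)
Your argument for the oracle term when $q\le T^2$ follows the paper's mechanism: $m$ cube blocks coupled to a shared simplex, a first-minimizer vertex oracle, Lemma~\ref{lem:hull}, conditional Hoeffding for pre-phase replies, the support bound for new replies, and a count of costly phases. The $\sqrt T$ branch, however, has a genuine gap, and that branch carries the whole statement when $q>T^2$.

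A one-dimensional Rademacher game cannot give \eqref{eq:hp}. The losses $\sigma_tG\ip{e}{x}$ are minimized at opposite ends of the segment, so there is no common minimizer. Moreover, a learner that spends one call on direction $e$ and plays the returned point forever has zero regret whenever $S_T=\sum_t\sigma_t\ge0$. That event has probability at least $1/2$, so $\Prob\{\Reg_T\ge cGD\sqrt T\}\le1/2<1-\delta$ for every $c>0$. Conditioning on the sign of $S_T$ only reveals that sign to the learner. Also, $\Prob\{|S_T|<c\sqrt T\}$ is of order $c$, so $c$ would have to shrink with $\delta$.

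The missing idea is to make every per-round regret term nonnegative and individually large with high probability. The paper uses $K=[-b,b]^{Tk}$ with one fresh block per round, $k=\lceil8\log(T/\delta)\rceil$, $b=D/(2\sqrt{Tk})$, and $f_t(x)=(G/\sqrt k)\ip{\sigma_t}{x^{(t)}}$. The point with blocks $-b\sigma_t$ minimizes every loss. Since $x_t$ is fixed before $\sigma_t$, the conditional sign bound applied to the action itself gives $f_t(x_t)-f_t(u)\ge GD/(4\sqrt T)$ in every round with probability $1-Te^{-k/8}$, for any number of calls.

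Your closing remark about $m=T$ phases of length one points in this direction, but as written it has two problems. It reuses the count $\sum_jn_j\le Q<q\asymp m^2$, which fails once $q>T^2$. It also relies on a union bound over up to $Q$ replies, which makes $k$, and hence the dimension, grow with $Q$ instead of staying $O(T\log(T/\delta))$. A shared simplex with $N\gg q$ vertices has the same effect on the dimension.

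In the phase branch, several choices also contradict the stated quantitative claims.
\begin{enumerate}
 \item With a simplex on $N\gg q$ vertices, the dimension exceeds $O(\sqrt q\log(q/\delta))$. Your radius $\asymp D/\sqrt q$ also becomes impossible, because a ball inside $K$ projects into the simplex, whose inradius is of order $1/N$. The comparator's penalty need only be a constant fraction of the pre-phase deficit, not negligible. The paper takes $n=64m$, so $a/\sqrt n=a/(8\sqrt m)$ against $a/(2\sqrt m)$, giving $r=a/(128m)$ at $\delta=1/4$.
 \item With $k=\lceil8\log(mq/\delta)\rceil$, pre-phase correlations and the ball's signal are bounded by half the optimal per-round signal, not by $o(1)$. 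Your bound of $O(GDT/\sqrt m)$ on the ball's contribution is too weak, since that is the order of the lower bound itself.
 \item Your threshold $n_j\le m/4$ does not close the count when $m=\lceil\sqrt q\rceil$, since a budget just below $m^2$ can push every phase above $m/4$. The per-round bound in fact holds for $n_j\le4m$, and $Q<m^2$ then leaves fewer than $m/4$ phases above it.
 \item The common minimizer pairs the barycenter with $z_j^*=-a\sigma_j/\sqrt{mk}$ in every block, not with a zero cube part.
\end{enumerate}
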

\begin{proof}
\paragraph{Construction and common minimizer.}\label{sec:construction}
The cube coordinates carry fresh phase information, while the simplex coordinates will penalize concentration on too few newly acquired replies. Using separate cube blocks lets one comparator optimize all phases simultaneously; using a shared simplex ensures that the cost depends on how much weight an action places on its new replies.

Assume $G=D=1$ and $q=Q+1\le T^2$. Set
\begin{equation}\label{eq:parameters}
 m=\lceil\sqrt q\rceil,\quad L=\lfloor T/m\rfloor,\quad n=64m,\quad
 k=\left\lceil8\log\frac{mq}{\delta}\right\rceil,\quad a=1/\sqrt6.
\end{equation}
Work temporarily in the affine Euclidean space
$\calH=\R^{mk}\times\{w\in\R^n:\one^\top w=a\}$. With
$\Delta_n=\{w\ge0:\one^\top w=1\}$, define
\begin{align}
 K&=\left[-\frac a{\sqrt{mk}},\frac a{\sqrt{mk}}\right]^{mk}\times a\Delta_n,
 &c&=(0,a\one/n),\label{eq:body}\\
 C&=B_{\calH}(c,r),
 &r&=\frac{a}{2\max\{\sqrt{mk},n\}}.\label{eq:ball}
\end{align}
The cube and simplex have diameters $2a$ and $\sqrt2a$, respectively, so $K$ has diameter one in the product metric. A radius-$r$ displacement from $c$ changes every coordinate by at most $r$: the cube constraints remain satisfied and every simplex coordinate stays at least $a/(2n)$. Thus $C\subseteq K$, and $K$ is full-dimensional in $\calH$, of dimension $mk+n-1$. An affine isometry identifies this space with an ordinary Euclidean space without changing any oracle or distance statement. Fix an order on the vertices
$(as/\sqrt{mk},ae_\ell)$, $s\in\{-1,1\}^{mk}$, and always return the first minimizer.

Sample independent uniform sign vectors $\sigma_j\in\{-1,1\}^k$ and set $u_j=\sigma_j/\sqrt k$. In phase $j$, consisting of $L$ rounds, reveal the loss
\begin{equation}\label{eq:loss}
 f_t(z,w)=\frac1{\sqrt2}\bigl(\ip{u_j}{z_j}+\norm w\bigr).
\end{equation}
The remaining $T-mL$ losses are zero. These are globally convex and $1$-Lipschitz on $\calH$, since the two components in \eqref{eq:loss} act on orthogonal coordinate spaces and each has Lipschitz constant one. The point
\begin{equation}\label{eq:comparator}
 z_j^*=-a\sigma_j/\sqrt{mk}\quad(j=1,\ldots,m),\qquad w^*=a\one/n
\end{equation}
minimizes every loss over $K$: the optimal cube value is $-a/\sqrt m$, and the minimum simplex norm is $a/\sqrt n$. All instantaneous regret terms are consequently nonnegative.

\paragraph{Previously acquired replies.}
For independent uniform signs $\xi_i$ and any fixed $s\in[-1,1]^k$, the exponential-moment calculation
\[
 \E\exp\!\left(-\tfrac12\sum_i s_i\xi_i\right)
 =\prod_i\cosh(s_i/2)\le e^{k/8}
\]
gives $\Prob\{\sum_i s_i\xi_i<-k/2\}\le e^{-k/8}$ by Markov's inequality. This is the usual Hoeffding argument \citep{hoeffding1963}, and also holds conditionally when $s$ is measurable with respect to the past and the signs are independent of that past.

Fix a call-allocation convention before analyzing the transcript. Points returned during preprocessing are available before every phase. Calls after one phase's final feedback and before the next action are assigned to the next phase, as are all subsequent calls within that phase. Calls after the nonzero-loss horizon may be ignored, since their replies cannot affect its actions. At the start of phase $j$, before its first loss has been revealed, condition on the seed, past signs, and the oracle points acquired before the phase. Every such point is fixed and independent of $\sigma_j$. For such a point $v$, its cube block has the form $as_j/\sqrt{mk}$, whence
\[
 \ip{u_j}{z_j(v)}=\frac{a}{k\sqrt m}\sum_{\ell=1}^k\sigma_{j,\ell}s_{j,\ell}.
\]
Conditionally on the pre-phase history, a union bound over its at most $Q$ oracle points gives failure probability at most $Qe^{-k/8}$. Averaging this conditional probability over the history and then union bounding over the $m$ phases gives an event $\calE$ with
\begin{equation}\label{eq:concentration}
 \Prob(\calE)\ge1-mQe^{-k/8}\ge1-\delta,
 \qquad \ip{u_j}{z_j(v)}\ge-\frac{a}{2\sqrt m}
\end{equation}
for every oracle point acquired before its phase. No union bound over possible adaptive query directions is needed. Every point of $C$ satisfies the same inequality, because $\norm{z_j}\le r\le a/(2\sqrt m)$. Points acquired during the phase are not subject to this concentration argument.

\paragraph{Newly acquired replies.}
Let $N_j$ be the number of calls in phase $j$; then $\sum_jN_j\le Q$. On the probability-one event of Lemma~\ref{lem:hull}, every action in phase $j$ can be written
$x=(1-p)y_{\mathrm{pre}}+p y_{\mathrm{new}}$, $0\le p\le1$, where $y_{\mathrm{pre}}$ combines the supplied ball and oracle points acquired before the phase, and $y_{\mathrm{new}}$ combines at most $N_j$ points acquired during the phase. Allowing even points obtained later in that phase only enlarges this set of representations. Empty groups have weight zero. On $\calE$,
\begin{equation}\label{eq:signal}
 \ip{u_j}{z_j(x)-z_j^*}\ge\frac{a(1-p)}{2\sqrt m}.
\end{equation}
The simplex component of $y_{\mathrm{new}}$ has mass $a$ on at most $N_j$ coordinates. By Cauchy--Schwarz, its norm is at least $a/\sqrt{N_j}$. Nonnegative coordinates give $\norm{w(x)}\ge ap/\sqrt{N_j}$. For $N_j\ge1$,
\begin{equation}\label{eq:phase-inequality}
 f_t(x)-f_t(x^*)\ge\frac1{\sqrt2}
 \left[\frac{a(1-p)}{2\sqrt m}+\frac{ap}{\sqrt{N_j}}-\frac a{\sqrt n}\right].
\end{equation}
For $N_j=0$, set $p=0$ and omit the fraction containing $N_j$. This inequality allows arbitrary reweighting of the full oracle history. Its two positive terms cannot be avoided simultaneously: reducing the pre-phase weight $1-p$ requires increasing the newly acquired weight $p$. If the phase has only $N_j$ newly acquired points, that increase costs $p/\sqrt{N_j}$ in the shared simplex. Equivalently, minimizing the positive part over $p\in[0,1]$ selects the smaller of the two coefficients; it does not remove both. For each realized transcript, enlarging the action hull to include all points eventually obtained in the current phase can only strengthen the learner, so the inequality also holds for the original causal action.

\paragraph{Summing the recurring cost.}
If $N_j\le4m$, the first two terms in brackets in \eqref{eq:phase-inequality} sum to at least $a/(2\sqrt m)$, whereas $a/\sqrt n=a/(8\sqrt m)$. Every round in that phase has regret at least $3/(16\sqrt{3m})$. Since $Q<m^2$, fewer than $m/4$ phases have $N_j>4m$. The other phases contribute nonnegatively, and $m\le T$, $mL\ge T/2$, and $m\le2\sqrt q$. Therefore, on $\calE$,
\begin{equation}\label{eq:lower-sum}
 \Reg_T\ge\frac{3m}{4}L\frac{3}{16\sqrt{3m}}
 \ge\frac{9}{128\sqrt6}\frac{T}{q^{1/4}}
 >\frac1{40}\frac{T}{q^{1/4}}.
\end{equation}
The parameter choice now has a direct interpretation. With $m$ phases, buying more than $4m$ replies in most phases would require order $m^2$ calls, which the budget does not provide. A constant fraction of phases therefore retain per-round loss of order $m^{-1/2}$. Choosing $m$ on the order of $\sqrt{Q+1}$ yields the fourth-root dependence.

This includes $Q=0$ and $T=1$. At zero budget the concentration event is automatic and all actions lie in the supplied ball.

\paragraph{Fixed losses, geometry, and the statistical term.}\label{sec:completion}
The body, supplied ball, and oracle above were fixed before either the signs or the seed. The sign distribution has finite support. For each learner, averaging the probability of \eqref{eq:lower-sum} over that support gives some fixed sign sequence for which the success probability over the seed alone is at least $1-\delta$. This sequence may depend on the learner, but not on its realized seed. No minimax interchange or seed-dependent choice of the body is used.

The dimension is $mk+64m-1=O(\sqrt q\log(q/\delta))$. At $\delta=1/4$, $q\le m^2$ implies
\[
 k\le8\log(4m^3)+1\le13+24\log m\le24m.
\]
Thus $\sqrt{mk}\le n$, and $r=a/(128m)\ge1/(256\sqrt{6q})$. Scaling the centered Euclidean coordinates by $D$ and using losses $\widetilde f_t(x)=GD f_t(x/D)$ gives the stated diameter, Lipschitz constant, radius, and regret.

A second fixed body supplies the statistical term when $q>T^2$. Set $k=\lceil8\log(T/\delta)\rceil$, $b=D/(2\sqrt{Tk})$, and $K=[-b,b]^{Tk}$. In round $t$, reveal
\begin{equation}\label{eq:statistical}
 f_t(x)=\frac G{\sqrt k}\ip{\sigma_t}{x^{(t)}},
\end{equation}
where the $T$ blocks are disjoint and the sign vectors are independent. The common minimizer has blocks $u^{(t)}=-b\sigma_t$ and loss $-GD/(2\sqrt T)$ in every round. Before feedback, feasibility gives $x_t^{(t)}=b s_t$ with $s_t\in[-1,1]^k$ independent of the fresh signs. The same conditional sign bound shows that, with probability at least $1-Te^{-k/8}\ge1-\delta$, every round has loss at least $-GD/(4\sqrt T)$. Hence $\Reg_T\ge GD\sqrt T/4$. This argument holds even with an explicit description of the cube and unlimited calls. Its dimension is $Tk$; supply the centered radius-$b$ ball and fix an ordered vertex oracle.

For $q>T^2$, the maximum in \eqref{eq:hp} equals $\sqrt T$, completing the bound in this branch. The finite-support averaging argument extracts a deterministic oblivious loss sequence for each learner, and all its per-round regret terms remain nonnegative.
\end{proof}

\begin{proof}[Proof of Theorem~\ref{thm:joint}]
The construction in \eqref{eq:joint-parameters}--\eqref{eq:joint-upper-proof} proves the upper bound and respects both caps. For the lower bound, every jointly constrained learner makes at most $Q_*$ calls in total. Apply Proposition~\ref{prop:hp} with this total allowance and $\delta=1/4$. The fixed sequence extracted for that learner has nonnegative regret on every feasible execution and satisfies the high-probability lower bound. Hence
\[
 \E\Reg_T\ge\frac{3GD}{160}\Psi(T,Q_*+1)
 \ge\frac{GD}{100}\Psi(T,Q_*+1).
\]
Taking the worst case over instances and then the infimum over learners proves \eqref{eq:joint-rate}.
\end{proof}

\subsection{Consequences and recovery of previous rates}\label{sec:consequences}
The total-only and strict-only problems are special cases of Theorem~\ref{thm:joint}. Their upper constants can be sharpened using the constructions already analyzed.

\begin{corollary}[Total oracle budget]\label{cor:total}
For every integer $Q\ge0$,
\begin{equation}\label{eq:minimax}
 \frac{GD}{100}\Psi(T,Q+1)\le\mathcal R(T,Q;G,D)
 \le3GD\Psi(T,Q+1).
\end{equation}
\end{corollary}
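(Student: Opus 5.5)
The plan is to read off both sides of \eqref{eq:minimax} from Theorem~\ref{thm:joint} and from the total-budget specialization of Proposition~\ref{prop:finite-upper}. By \eqref{eq:model-specializations}, $\mathcal R(T,Q;G,D)=\mathcal R_{\mathrm{joint}}(T,Q,\infty;G,D)$. With $B=\infty$ we have $\min\{Q,BT\}=Q$, so $q_*=Q+1$.

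\emph{Lower bound.} The left inequality in \eqref{eq:joint-rate} then gives $\frac{GD}{100}\Psi(T,Q+1)\le\mathcal R(T,Q;G,D)$ directly. Equivalently, one can apply Proposition~\ref{prop:hp} with $\delta=1/4$. Every per-round regret term is nonnegative on the extracted fixed sequence, so
\[
\E\Reg_T\ge\tfrac34\cdot\tfrac{GD}{40}\Psi(T,Q+1).
\]
No further work is needed here.

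\emph{Upper bound.} The improved constant $3$, in place of $6$, should come from the single-copy blocked algorithm, Algorithm~\ref{alg:blocked}, rather than the interleaved schedule. Under a total-only budget there is no strict cap, so interleaving is unnecessary.

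\begin{itemize}
\item Choose $M$ and $H=M$ as in \eqref{eq:total-parameters}. Then $M(M-1)\le Q$, so Proposition~\ref{prop:finite-upper} uses at most $(M-1)H\le Q$ calls on every execution.
\item Combining \eqref{eq:finite-upper} with \eqref{eq:block-squares}, $T-b_M\le T$, and $M\le T$ gives \eqref{eq:total-constant}, namely $\Reg_T\le(1+\sqrt5/2)GD\,T/\sqrt M$.
\item Case $M=T$: the bound is $(1+\sqrt5/2)GD\sqrt T<3GD\sqrt T\le3GD\Psi(T,Q+1)$.
\item Case $M<T$: maximality of the floor gives $Q+1\le M(M+1)\le2M^2$. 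Hence $T/\sqrt M\le2^{1/4}T/(Q+1)^{1/4}$, and $(1+\sqrt5/2)2^{1/4}<3$.
\item Case $Q=0$: the formula gives $M=1$ and regret at most $GDT=GD\Psi(T,1)$, using no calls.
\end{itemize}

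In every case the regret is at most $3GD\Psi(T,Q+1)$ pathwise, for every comparator. This bounds the worst-case expected regret of an admissible deterministic learner.

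\emph{Main obstacle.} The only delicate point is the numerical check of the constants and the floor-maximality step. One verifies that $(1+\sqrt5/2)\approx2.118$ and $2^{1/4}\cdot2.118\approx2.52<3$. One also checks that $\sqrt{S}\le\sqrt{T^2/M+M/4}\le(\sqrt{1+1/4})\,T/\sqrt M$ using $M\le T$, which yields the coefficient $1+\sqrt5/2$.
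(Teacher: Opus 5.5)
Your proposal is correct and follows the paper's proof: you read off the lower bound from Theorem~\ref{thm:joint} with $B=\infty$ (so $q_*=Q+1$), and you obtain the constant $3$ from the single-copy choice \eqref{eq:total-parameters} with $H=M$ and the bound \eqref{eq:total-constant}. Your case analysis reproduces the verification the paper gives after \eqref{eq:total-constant}, namely $M=T$ versus $M<T$ via $Q+1\le M(M+1)\le 2M^2$, together with the numerical checks $1+\sqrt5/2<3$ and $(1+\sqrt5/2)2^{1/4}<3$.
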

\begin{proof}
Set $B=\infty$ in Theorem~\ref{thm:joint} for the lower bound. The single-copy choice \eqref{eq:total-parameters} and bound \eqref{eq:total-constant} give the upper constant $3$.
\end{proof}

\begin{corollary}[Strict per-round budget]\label{cor:strict}
For every integer $B\ge1$,
\begin{equation}\label{eq:strict-rate}
 \begin{aligned}
 \frac{GD}{100\,2^{1/4}}\max\!\left\{\sqrt T,\frac{T^{3/4}}{B^{1/4}}\right\}
 &\le\mathcal R_{\mathrm{round}}(T,B;G,D)\\
 &\le5GD\max\!\left\{\sqrt T,\frac{T^{3/4}}{B^{1/4}}\right\}.
 \end{aligned}
\end{equation}
\end{corollary}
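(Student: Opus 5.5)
The plan is to specialize Theorem~\ref{thm:joint} at $Q=BT$, so that $q_*=1+BT$, and then compare $\Psi(T,1+BT)$ with $\max\{\sqrt T,T^{3/4}B^{-1/4}\}$ in both directions.

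\textbf{Lower bound.} Use $1+BT\le 2BT$ for $B,T\ge1$. This gives $T(1+BT)^{-1/4}\ge 2^{-1/4}T^{3/4}B^{-1/4}$. Since also $\sqrt T\ge 2^{-1/4}\sqrt T$, we obtain $\Psi(T,1+BT)\ge 2^{-1/4}\max\{\sqrt T,T^{3/4}B^{-1/4}\}$. Combining this with the constant $1/100$ from Theorem~\ref{thm:joint}, applied via \eqref{eq:model-specializations}, yields the stated lower constant.

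\textbf{Upper bound.} Theorem~\ref{thm:joint} gives only the constant $6$, because $(1+BT)^{-1/4}\le (BT)^{-1/4}$. We therefore sharpen it by rerunning the two-copy schedule of Proposition~\ref{prop:schedule} with a parameter choice tailored to $Q=BT$. The total cap is then automatic: each round has at most $B$ calls, so the total is at most $BT$.

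Choose $M=\min\{T,\max\{1,\lfloor\sqrt{BT/2}\rfloor\}\}$ and $H=M$. We must check the capacity condition \eqref{eq:capacity}, $M\le B\lfloor T/M\rfloor$, whenever an update occurs. Updates occur only when $M\ge3$. In that case $\lfloor T/M\rfloor\ge T/(2M)$ and $M^2\le BT/2$, so $B\lfloor T/M\rfloor\ge BT/(2M)\ge M$, as required.

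Bound \eqref{eq:two-copy-rate} then gives $\Reg_T\le(1+\sqrt{5/2})GD\,T/\sqrt M$. Split into cases on $M$.
\begin{itemize}
\item If $M=T$, the bound is $(1+\sqrt{5/2})GD\sqrt T<5GD\sqrt T$, since $1+\sqrt{5/2}\approx2.58$.
\item If $M=1$ because $BT<8$, then $T^{3/4}B^{-1/4}\ge T\,8^{-1/4}$. Here we use the trivial regret bound $GDT$ (no calls are made) and get $GDT\le 8^{1/4}GD\,T^{3/4}B^{-1/4}<5GD\,T^{3/4}B^{-1/4}$.
\item Otherwise $M=\lfloor\sqrt{BT/2}\rfloor\ge2$. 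Then $M\ge\tfrac23\sqrt{BT/2}$, because $\lfloor x\rfloor\ge 2x/3$ for $x\ge 2$. This gives $T/\sqrt M\le\sqrt{3/2}\,2^{1/4}\,T^{3/4}B^{-1/4}\approx1.46\,T^{3/4}B^{-1/4}$, so the regret is at most $2.58\cdot1.46<3.8$ times $GD\,T^{3/4}B^{-1/4}$.
\end{itemize}
In every case the regret is below $5GD\max\{\sqrt T,T^{3/4}B^{-1/4}\}$.

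\textbf{Main obstacle.} The delicate part is the constant bookkeeping for the floor in $M$, especially the small-$M$ regime with $M\in\{1,2\}$, where no updates happen. The plan is to handle it as above by treating $M=1$ with the trivial bound. The case $M=2$ falls under the third case and needs no updates, so \eqref{eq:two-copy-exact} still applies with $W$ irrelevant.
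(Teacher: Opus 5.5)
Your proposal is correct and follows the paper's proof. You specialize Theorem~\ref{thm:joint} at $Q=BT$, use $BT+1\le 2BT$ for the lower constant, and rerun the two-copy schedule with $S_0=BT$ (exactly your choice of $M$) for the upper constant. The only difference is in how the floor in $M$ is handled: the paper uses the one-line estimate $\max\{1,\lfloor x\rfloor\}\ge x/2$, giving $M\ge\min\{T,\sqrt{BT}\}/(2\sqrt2)$ and the constant $(1+\sqrt{5/2})2^{3/4}<5$, whereas your three-case split gives a slightly smaller constant.
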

\begin{proof}
Set $Q=BT$ in Theorem~\ref{thm:joint}. Since $BT+1\le2BT$, its lower bound implies the stated lower constant. The schedule \eqref{eq:joint-parameters} now has $S_0=BT$, so
$M\ge\min\{T,\sqrt{BT}\}/(2\sqrt2)$. In \eqref{eq:two-copy-rate}, this gives upper constant $(1+\sqrt{5/2})\sqrt{2\sqrt2}<5$.
\end{proof}

\begin{remark}[The one-call operating point]\label{rem:one-call}
At $B=1$, Corollary~\ref{cor:strict} gives $\Theta(GDT^{3/4})$. Its upper order agrees with online Frank--Wolfe \citep{hazan2012,weibel2026} and with the linear-total-budget guarantees \citep{garber2022,lu2025decentralized}. The lower bound establishes that this order is minimax over all admissible randomized learners. More generally, increasing the strict cap from $1$ to $B$ improves the oracle-dependent rate by $B^{1/4}$ until the $\sqrt T$ term dominates.
\end{remark}

\begin{corollary}[Budgets for a target regret]\label{cor:target}
Let $1/2\le\alpha\le1$ and $q_*=1+\min\{Q,BT\}$. A uniform guarantee of order $GDT^\alpha$ is possible if and only if
\begin{equation}\label{eq:target}
 q_*=\Omega\bigl(T^{4(1-\alpha)}\bigr).
\end{equation}
More explicitly, a guarantee $\E\Reg_T\le C_0GDT^\alpha$ requires
\begin{equation}\label{eq:target-necessary}
 q_*\ge \frac{T^{4(1-\alpha)}}{(100C_0)^4},
\end{equation}
and $q_*\ge T^{4(1-\alpha)}$ suffices for regret at most $6GDT^\alpha$. With only a total cap, the minimum sufficient order of $Q+1$ is $T^{4(1-\alpha)}$. With only a strict cap, the minimum sufficient order of $B$ is $\max\{1,T^{3-4\alpha}\}$.
\end{corollary}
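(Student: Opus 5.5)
The plan is to read Corollary~\ref{cor:target} off Theorem~\ref{thm:joint} by inverting $\Psi(T,q_*)=\max\{\sqrt T,Tq_*^{-1/4}\}$. For $\alpha\ge1/2$ the $\sqrt T$ term is at most $T^\alpha$. So a bound of order $T^\alpha$ hinges only on the oracle-dependent term $Tq_*^{-1/4}$, and this term is at most $T^\alpha$ exactly when $q_*\ge T^{4(1-\alpha)}$. The ``if and only if'' in \eqref{eq:target} will follow once the explicit necessary and sufficient forms are established, with constants absorbed into the $\Omega$.

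\emph{Necessity.} Suppose an admissible learner guarantees $\E\Reg_T\le C_0GDT^\alpha$ uniformly. The lower bound of Theorem~\ref{thm:joint} gives $\frac{GD}{100}Tq_*^{-1/4}\le\frac{GD}{100}\Psi(T,q_*)\le C_0GDT^\alpha$. Rearranging gives $q_*^{1/4}\ge T^{1-\alpha}/(100C_0)$, and raising to the fourth power yields \eqref{eq:target-necessary}.

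\emph{Sufficiency.} If $q_*\ge T^{4(1-\alpha)}$, then $Tq_*^{-1/4}\le T^\alpha$, and $\sqrt T\le T^\alpha$ because $\alpha\ge1/2$. Hence $\Psi(T,q_*)\le T^\alpha$, and the upper bound of Theorem~\ref{thm:joint} gives regret at most $6GDT^\alpha$.

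The specializations come from taking $q_*=Q+1$ (with $B=\infty$) or $q_*=BT+1$ (with $Q=BT$). In the total-only case, necessity and sufficiency give matching orders $Q+1\asymp T^{4(1-\alpha)}$. In the strict-only case, the condition $BT+1\gtrsim T^{4(1-\alpha)}$ reads $B\gtrsim T^{3-4\alpha}$. Since $B$ is a positive integer, this order becomes $\max\{1,T^{3-4\alpha}\}$.

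One point needs care: when $\alpha>3/4$ we have $T^{3-4\alpha}<1$, and $B=1$ must still suffice. This holds because $BT+1\ge T\ge T^{4(1-\alpha)}$ whenever $4(1-\alpha)\le1$. For $\alpha\le3/4$, I would choose $B=\lceil T^{3-4\alpha}\rceil$, so that $BT\ge T^{4(1-\alpha)}$. Minimality in both regimes follows from the necessary bound \eqref{eq:target-necessary} together with $BT+1\le2BT$.

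The main obstacle is only this integer and rounding bookkeeping in the strict case. Everything else is a direct substitution into \eqref{eq:joint-rate}.
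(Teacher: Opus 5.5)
Your proposal is correct and follows the paper's own proof. Necessity comes from rearranging the lower half of Theorem~\ref{thm:joint}, sufficiency from bounding both terms of $\Psi(T,q_*)$ by $T^\alpha$, and the specializations from $B=\infty$ and $Q=BT$; your explicit rounding in the strict case ($B=\lceil T^{3-4\alpha}\rceil$ for $\alpha\le 3/4$, $B=1$ otherwise, and $BT+1\le 2BT$ for minimality) simply spells out what the paper delegates to Corollary~\ref{cor:strict}.
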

\begin{proof}
The lower half of Theorem~\ref{thm:joint} implies
$C_0T^\alpha\ge T/(100q_*^{1/4})$, which rearranges to \eqref{eq:target-necessary}. Conversely, $q_*\ge T^{4(1-\alpha)}$ bounds the oracle-dependent term by $T^\alpha$, and $\sqrt T\le T^\alpha$. This proves the sufficient bound. The total-only assertion sets $B=\infty$. For a strict-only budget, $q_*=BT+1$; inversion gives $B=\Omega(T^{3-4\alpha})$ when that power grows, and the smallest allowed cap is $1$. The matching choices follow from Corollary~\ref{cor:strict}.
\end{proof}
At $\alpha=1$, zero calls suffice by playing $c$ throughout. At $\alpha=3/4$, a linear total budget and one call per round suffice. At $\alpha=1/2$, the necessary and sufficient scales are $Q=\Theta(T^2)$ and $B=\Theta(T)$ when both restrictions are imposed.

\begin{corollary}[The polynomial tradeoff under joint caps]\label{cor:polynomial}
For $0\le\theta\le1$, take
\begin{equation}\label{eq:theta-budgets}
 Q_\theta=\lceil T^{2\theta}\rceil-1,\qquad
 B_\theta=\max\{1,\lceil T^{2\theta-1}\rceil\}.
\end{equation}
Then
\begin{equation}\label{eq:theta-rate}
 \mathcal R_{\mathrm{joint}}(T,Q_\theta,B_\theta;G,D)
 =\Theta(GDT^{1-\theta/2}).
\end{equation}
\end{corollary}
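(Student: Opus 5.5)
The plan is to reduce Corollary~\ref{cor:polynomial} directly to Theorem~\ref{thm:joint}. The only work is computing the effective allowance $q_*=1+\min\{Q_\theta,B_\theta T\}$ for the budgets in \eqref{eq:theta-budgets} and evaluating $\Psi(T,q_*)$. Since Theorem~\ref{thm:joint} gives $\mathcal R_{\mathrm{joint}}=\Theta(GD\Psi(T,q_*))$ with absolute constants $1/100$ and $6$, it suffices to show that $q_*$ is within constant factors of $T^{2\theta}$. Once that holds, $\Psi(T,q_*)=\max\{\sqrt T,Tq_*^{-1/4}\}$ is within constant factors of $\max\{T^{1/2},T^{1-\theta/2}\}=T^{1-\theta/2}$, because $\theta\le1$ gives $1-\theta/2\ge1/2$.

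First I would show that the total cap is the active one, up to a constant. We have $Q_\theta+1=\lceil T^{2\theta}\rceil$, so $T^{2\theta}\le Q_\theta+1\le T^{2\theta}+1\le 2T^{2\theta}$. For the per-round capacity, I split into two cases.
\begin{itemize}
\item If $2\theta\ge1$, then $B_\theta T\ge T^{2\theta-1}\cdot T=T^{2\theta}\ge Q_\theta+1-1=Q_\theta$.
\item If $2\theta<1$, then $B_\theta=1$ and $B_\theta T=T\ge T^{2\theta}$. Because $T$ is an integer, this gives $T\ge\lceil T^{2\theta}\rceil-1=Q_\theta$.
\end{itemize}
In both cases $\min\{Q_\theta,B_\theta T\}=Q_\theta$. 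Hence $q_*=\lceil T^{2\theta}\rceil\in[T^{2\theta},2T^{2\theta}]$.

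Next I would plug this into the two sides of \eqref{eq:joint-rate}.
\begin{itemize}
\item For the upper bound, $Tq_*^{-1/4}\le T^{1-\theta/2}$ and $\sqrt T\le T^{1-\theta/2}$. So the upper bound is at most $6GDT^{1-\theta/2}$.
\item For the lower bound, $Tq_*^{-1/4}\ge 2^{-1/4}T^{1-\theta/2}$. So the lower bound is at least $GDT^{1-\theta/2}/(100\cdot2^{1/4})$.
\end{itemize}
This proves \eqref{eq:theta-rate} with explicit constants, uniformly in $\theta\in[0,1]$ and $T\ge1$.

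The only step needing care is the case analysis on whether the strict cap $B_\theta$ could bind below $Q_\theta$. In particular, the regime $\theta<1/2$ is where $B_\theta$ is clamped to $1$, and there one must check that $T\ge Q_\theta$ using integrality. The endpoint $\theta=0$ gives $Q_\theta=0$ and $q_*=1$, so regret is $\Theta(GDT)$, consistent with the formula. No further obstacle is expected.
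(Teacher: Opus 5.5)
Your proposal is correct and is essentially the paper's proof: show $B_\theta T\ge T^{2\theta}\ge Q_\theta$, so that $q_*=\lceil T^{2\theta}\rceil\in[T^{2\theta},2T^{2\theta}]$, then read off both sides of Theorem~\ref{thm:joint} using $T^{1-\theta/2}\ge\sqrt T$. Your case split only unpacks the paper's one-line claim, and the appeal to integrality when $2\theta<1$ is unnecessary, since $T\ge T^{2\theta}>\lceil T^{2\theta}\rceil-1$ already.
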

\begin{proof}
Because $B_\theta T\ge T^{2\theta}>Q_\theta$, the effective allowance satisfies
$q_*=Q_\theta+1=\lceil T^{2\theta}\rceil$. Hence
$T^{2\theta}\le q_*\le2T^{2\theta}$, while
$T^{1-\theta/2}\ge\sqrt T$. Equation~\eqref{eq:theta-rate} follows from Theorem~\ref{thm:joint}.
\end{proof}
The pair of total-work and regret exponents in \eqref{eq:theta-budgets}--\eqref{eq:theta-rate} recovers the convex case of \citet{lu2025decentralized}. The corollary additionally imposes the stated strict cap and supplies the matching lower bound. It concerns the centralized oracle problem; communication requirements in the decentralized setting are separate.

\section{Joint budgets with prescribed smoothness}\label{sec:smooth}
A prescribed smoothness bound changes the lower-bound construction because the norm penalty becomes increasingly curved as the simplex dimension grows. An analytic replacement preserves the same phase mechanism while controlling curvature. The upper algorithm of Section~\ref{sec:upper} remains valid for this smaller loss class.

Define
\begin{equation}\label{eq:smooth-rate-definition}
 \lambda=\frac{\beta D}{\sqrt3G},\qquad
 \Psi_\lambda(T,q)=\max\!\left\{\sqrt T,
 T\min\!\left\{q^{-1/4},(\lambda/q)^{1/3}\right\}\right\}.
\end{equation}
\begin{theorem}[Smooth losses under joint budgets]\label{thm:smooth}
For every $\beta\ge0$ and the budgets of Theorem~\ref{thm:joint}, let $q_*=1+\min\{Q,BT\}$. Then
\begin{equation}\label{eq:smooth}
 \frac{GD}{240}\Psi_\lambda(T,q_*)
 \le\mathcal R_{\mathrm{joint},\beta}(T,Q,B;G,D)
 \le6GD\Psi(T,q_*).
\end{equation}
The upper algorithm is deterministic. For every $0<\delta<1/2$, a fixed body, fixed exact oracle, and finite distribution of globally real-analytic convex $G$-Lipschitz losses with gradient Lipschitz constant at most $\beta$ satisfy
\begin{equation}\label{eq:smooth-hp}
 \Prob_{f,\omega}\!\left\{\Reg_T\ge\frac{GD}{160}\Psi_\lambda(T,q_*)\right\}\ge1-\delta
\end{equation}
for every admissible jointly constrained learner. Every sequence has a common minimizer, and a deterministic oblivious sequence with this probability guarantee over the seed alone can be extracted for each learner.
\end{theorem}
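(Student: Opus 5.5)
The upper bound needs no new work. The blocked algorithm of Section~\ref{sec:upper} with the parameters \eqref{eq:joint-parameters} is deterministic. Its guarantee \eqref{eq:joint-upper-proof} holds for every convex $G$-Lipschitz sequence, and hence for the smaller $\beta$-smooth class. The lower bound repeats the proof of Proposition~\ref{prop:hp} with one change: the nonsmooth simplex penalty $\norm w$ in \eqref{eq:loss} is replaced by an analytic radial penalty. The plan is to use
\[
 \phi_\rho(w)=\sqrt{\norm w^2+\rho^2}-\rho ,
\]
scaled so that each loss remains $1$-Lipschitz. The body, supplied ball, vertex oracle and cube-block sign losses are kept unchanged, so Lemma~\ref{lem:hull} and the conditional Hoeffding event $\calE$ of \eqref{eq:concentration} carry over verbatim.

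Next I would check the regularity of $\phi_\rho$. It is convex, real-analytic, $1$-Lipschitz, and has gradient Lipschitz constant $1/\rho$. After rescaling to diameter $D$ and Lipschitz constant $G$, the smoothness constant becomes of order $G/(D\rho)$, so $\rho$ is forced to be of order $1/\lambda$ in normalized units. The factor $\sqrt3$ in $\lambda$ should come from the normalizations $a=1/\sqrt6$ and $1/\sqrt2$. The cube part is linear and contributes no curvature.

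The support penalty now reads $\phi_\rho(pw_{\mathrm{new}})\ge\phi_\rho(ap/\sqrt{N_j})$ by monotonicity. This is linear, of order $a p/\sqrt{N}$, when $a/\sqrt N\gtrsim\rho$, and quadratic, of order $a^2p^2/(N\rho)$, below that threshold. The comparator pays $\phi_\rho(a/\sqrt n)$. In the quadratic regime, the optimized per-round phase inequality replacing \eqref{eq:phase-inequality} becomes
\[
 \min_p\Bigl[\tfrac{a(1-p)}{2\sqrt m}+\tfrac{a^2p^2}{2N\rho}\Bigr]-\phi_\rho(a/\sqrt n).
\]
This stays of order $m^{-1/2}$ only when $N\lesssim \sqrt m/\rho\asymp\lambda\sqrt m$. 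Choosing $n$ large (of order $m$ times a constant, or larger if needed) makes the comparator term negligible in both regimes.

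The budget count then goes as before. If $N_j\le c\min\{m,\lambda\sqrt m\}$ in most phases, each round pays of order $m^{-1/2}$, and the count fails unless $Q\gtrsim m\min\{m,\lambda\sqrt m\}$. So I would set $m\asymp\max\{\sqrt q,(q/\lambda)^{2/3}\}$. This gives regret $T/\sqrt m\asymp T\min\{q^{-1/4},(\lambda/q)^{1/3}\}$, provided $m\le T$, which means phases have length $L\ge1$. When $m$ would exceed $T$, or $q\gtrsim T^2$, the statistical cube construction \eqref{eq:statistical} is reused. Its losses are linear, hence $0$-smooth, and give the $\sqrt T$ term.

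The main obstacle is the quadratic-regime bookkeeping with exact constants. It has three parts: showing that the per-round gap is still at least a constant times $a/\sqrt m$ for every $p$; keeping $\phi_\rho(a/\sqrt n)$ below a fixed fraction of it; and handling the regime crossover at $\lambda\sim q^{1/4}$, where the two minima in $\Psi_\lambda$ swap. These are what produce the worse constants $1/160$ and $1/240$.

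Finally, the $\delta=1/4$ high-probability bound converts to expectation as in the proof of Theorem~\ref{thm:joint}, using that the regret terms are nonnegative because the comparator is a common minimizer. Finite-support averaging then extracts a deterministic oblivious sequence for each learner.
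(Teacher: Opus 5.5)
Your proposal is correct and is essentially the paper's proof: same body, vertex oracle, hull and conditional Hoeffding events, radial penalty $\sqrt{\rho^2+\norm w^2}-\rho$ scaled by $1/\sqrt2$, phase count $m\asymp\max\{\sqrt q,(q/\lambda)^{2/3}\}$, and the linear statistical cube for the $\sqrt T$ branch and for $\lambda=0$. The one difference is that smoothness only forces $\rho\ge a/\lambda$; the paper exploits this by taking $\rho=a/h$ with $h=\min\{\lambda,q^{1/4}\}\le\sqrt m$ and $n=\max\{2,\lceil16h\sqrt m\rceil\}$, so whenever $N_j\le4h\sqrt m$ the guaranteed radius $ap/\sqrt{4h\sqrt m}$ stays below $\rho/2$ and only the quadratic estimate $P_\rho\ge8\norm w^2/(17\rho)$ is needed, which removes the linear/quadratic crossover you flag as the main obstacle (your choice $\rho\asymp1/\lambda$ also works, at the cost of that extra case analysis and a larger $n$ to control the comparator term).
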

The two sides coincide in order when $\lambda\ge q_*^{1/4}$. Below this threshold, the theorem gives a curvature-sensitive lower bound; the upper bound remains the general convex guarantee. The threshold and the resulting budget requirements are stated explicitly after the proof.

\subsection{Analytic construction and proof}\label{sec:smooth-proof}
On the affine ambient space in \eqref{eq:body}, $\norm w\ge a/\sqrt n$ even outside the simplex. Thus the original norm penalty is analytic there, with gradient Lipschitz constant at most $\sqrt n/a$. Including the factor $1/\sqrt2$ gives smoothness at most $\sqrt{3n}$, which grows with the oracle budget. To prescribe the curvature independently, for $\rho>0$ define
\begin{equation}\label{eq:radial}
 P_\rho(w)=\sqrt{\rho^2+\norm w^2}-\rho.
\end{equation}
\begin{lemma}[Regularity and quadratic estimates]\label{lem:radial}
The function $P_\rho$ is globally real-analytic, convex, $1$-Lipschitz, and has $1/\rho$-Lipschitz gradient. It is increasing in $\norm w$, and
\begin{equation}\label{eq:radial-estimates}
 P_\rho(w)\le\frac{\norm w^2}{2\rho},\qquad
 P_\rho(w)\ge\frac{8\norm w^2}{17\rho}\quad\text{when }\norm w\le\rho/2.
\end{equation}
\end{lemma}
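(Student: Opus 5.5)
The plan is to reduce everything to the scalar function $\phi(s)=\sqrt{\rho^2+s^2}-\rho$ on $s\ge0$ and to the composition $P_\rho(w)=\phi(\norm w)$. Equivalently, $P_\rho(w)=\norm{(\rho,w)}-\rho$, the Euclidean norm of the augmented vector $(\rho,w)$ shifted by a constant.

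\textbf{Analyticity, monotonicity, convexity.} Since $\rho^2+\norm w^2\ge\rho^2>0$ and $t\mapsto\sqrt t$ is real-analytic on $(0,\infty)$, $P_\rho$ is the composition of an analytic function with a positive polynomial. It is therefore globally real-analytic. Monotonicity in $\norm w$ is immediate because $\phi$ is increasing on $[0,\infty)$. Convexity follows from the augmented-norm representation: $w\mapsto(\rho,w)$ is affine and the norm is convex.

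\textbf{Lipschitz constants.} The gradient is
\[
 \nabla P_\rho(w)=\frac{w}{\sqrt{\rho^2+\norm w^2}},
\]
which has norm below one, so $P_\rho$ is $1$-Lipschitz. For the gradient bound, compute the Hessian
\[
 \frac{1}{(\rho^2+\norm w^2)^{1/2}}\Bigl(I-\frac{ww^\top}{\rho^2+\norm w^2}\Bigr).
\]
Its eigenvalues lie in $[0,1/\sqrt{\rho^2+\norm w^2}]\subseteq[0,1/\rho]$. This simultaneously reconfirms convexity and gives the $1/\rho$-Lipschitz gradient by the mean value inequality along segments.

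\textbf{Quadratic estimates.} Write $P_\rho(w)=\norm w^2/(\sqrt{\rho^2+\norm w^2}+\rho)$ by rationalizing. The upper bound $\norm w^2/(2\rho)$ follows because the denominator is at least $2\rho$. For the lower bound, assume $\norm w\le\rho/2$. Then the denominator is at most $\rho(\sqrt{5}/2+1)$, giving
\[
 P_\rho(w)\ge\frac{2\norm w^2}{(2+\sqrt5)\rho}.
\]
It remains to check $2/(2+\sqrt5)\ge8/17$, i.e.\ $34\ge16+8\sqrt5$, i.e.\ $\sqrt5\le 9/4$. This holds since $81/16=5.0625>5$.

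No step is a genuine obstacle. The only point requiring care is this constant check in the lower estimate, which just verifies.
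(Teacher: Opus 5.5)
Your proof is correct and follows essentially the same route as the paper. Both use explicit gradient and Hessian bounds for the Lipschitz and smoothness claims, positivity under the square root for analyticity, and the rationalized form $P_\rho(w)=\norm w^2/(\sqrt{\rho^2+\norm w^2}+\rho)$, with the denominator bounded below by $2\rho$ and above by $(1+\sqrt5/2)\rho<17\rho/8$ (your augmented-norm representation $\norm{(\rho,w)}-\rho$ is a small extra route to convexity, which the paper obtains from the positive semidefinite Hessian).
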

\begin{proof}
Let $s=\sqrt{\rho^2+\norm w^2}$. Direct differentiation gives
\[
 \nabla P_\rho(w)=\frac w s,\qquad
 \nabla^2P_\rho(w)=\frac I s-\frac{ww^\top}{s^3}.
\]
The Hessian is positive semidefinite, with eigenvalues at most $1/s\le1/\rho$, and the gradient norm is at most one. The strictly positive quantity inside the square root also proves real analyticity at every point. Monotonicity in the radius is immediate. Finally,
\[
 P_\rho(w)=\frac{\norm w^2}{\sqrt{\rho^2+\norm w^2}+\rho}.
\]
The denominator is at least $2\rho$. When $\norm w\le\rho/2$, it is at most $(1+\sqrt5/2)\rho<17\rho/8$. These observations prove \eqref{eq:radial-estimates}.
\end{proof}
The penalty is used to prescribe the regularity of the losses in the lower-bound instance.

\paragraph{Why the smooth exponent changes.}
In normalized coordinates, write the penalty scale as $\rho=a/h$. At small radii, $P_\rho$ is comparable to $\norm w^2/\rho$. Placing weight $p$ on $N$ newly acquired simplex vertices therefore gives a penalty on the order of $hp^2/N$, whereas pre-phase weight still gives a signal loss of order $(1-p)/\sqrt m$. The relevant threshold becomes $N\asymp h\sqrt m$ per phase, or $Q_*\asymp hm^{3/2}$ across $m$ phases. Solving this balance yields $T/\sqrt m\asymp T(h/Q_*)^{1/3}$ for positive $Q_*$. The proof also covers zero allowance by choosing $h=\min\{\lambda,(Q_*+1)^{1/4}\}$ and uses the statistical construction when the required phase count exceeds $T$.

\begin{proof}[Proof of Theorem~\ref{thm:smooth}]
The upper bound follows by applying the joint-budget algorithm to the smooth loss class. For the lower bound, it suffices to construct a hard instance for total allowance $Q_*=\min\{Q,BT\}$, since every jointly constrained learner satisfies that allowance. Within this proof write $Q_0=Q_*$ and $q=Q_0+1$. Normalize $G=D=1$ and suppose first that $\lambda=\beta/\sqrt3>0$. Put
\begin{equation}\label{eq:smooth-parameters}
 h=\min\{\lambda,q^{1/4}\},\qquad
 m=\left\lceil(q/h)^{2/3}\right\rceil.
\end{equation}
There is no requirement that $h\ge1$. The inequalities needed below are
\begin{equation}\label{eq:smooth-balance}
 h>0,\quad h^4\le q,\quad m\ge\sqrt q,\quad h\le\sqrt m,
 \quad q\le hm^{3/2},\quad m\le2(q/h)^{2/3}.
\end{equation}
They follow from $h\le q^{1/4}$ and $(q/h)^{2/3}\ge1$.

\paragraph{Geometry, feedback, and the common minimizer.}
First suppose $m\le T$. Choose
\begin{equation}\label{eq:smooth-geometry}
 L=\lfloor T/m\rfloor,\quad
 k=\left\lceil8\log\frac{mq}{\delta}\right\rceil,\quad
 n=\max\{2,\lceil16h\sqrt m\rceil\},\quad
 a=1/\sqrt6,\quad \rho=a/h.
\end{equation}
Use exactly the affine space, product body, ball, and ordered vertex oracle in \eqref{eq:body}--\eqref{eq:ball}, with these $m,k,n$. The geometric verification for this body only requires $m,k\ge1$ and $n\ge2$, so the diameter is one and the supplied positive-radius ball is contained in a full-dimensional body. The maximum with $2$ in \eqref{eq:smooth-geometry} keeps this true even at arbitrarily small prescribed smoothness.

Use independent sign vectors and phase directions as before, but replace \eqref{eq:loss} by
\begin{equation}\label{eq:smooth-loss}
 f_t(z,w)=\frac1{\sqrt2}\bigl(\ip{u_j}{z_j}+P_\rho(w)\bigr)
 \quad\text{in phase }j,
\end{equation}
with trailing zero losses. Lemma~\ref{lem:radial} makes these functions convex, globally analytic, and globally $1$-Lipschitz on $\calH$. Their gradient Lipschitz constant is at most
\[
 \frac{1}{\sqrt2\rho}=\sqrt3h\le\sqrt3\lambda=\beta.
\]
Restriction to an affine space projects the gradient onto its translation space and preserves the bound. The same global formulas and bounds hold on every alternative body used in Lemma~\ref{lem:hull}.

The common minimizer remains \eqref{eq:comparator}, because $P_\rho$ is increasing in the simplex norm. Its penalty satisfies
\begin{equation}\label{eq:smooth-comparator}
 P_\rho(w^*)\le\frac{a^2}{2\rho n}=\frac{ah}{2n}
 \le\frac{a}{32\sqrt m}.
\end{equation}
All per-round regret terms are again nonnegative.

\paragraph{The support cost in one phase.}
The conditional sign argument in \eqref{eq:concentration} is independent of the radial penalty, so its event still has probability at least $1-\delta$. Intersect it with the probability-one hull event. For an action of new weight $p$ in a phase with $N_j\ge1$ calls, the signal and support inequalities give
\begin{equation}\label{eq:smooth-support}
 \ip{u_j}{z_j(x)-z_j^*}\ge\frac{a(1-p)}{2\sqrt m},
 \qquad \norm{w(x)}\ge\frac{ap}{\sqrt{N_j}}.
\end{equation}
Suppose $N_j\le4h\sqrt m$ and set $s_0=ap/\sqrt{4h\sqrt m}$. Then $\norm{w(x)}\ge s_0$ and
\[
 \frac{s_0}{\rho}=\frac{p\sqrt h}{2m^{1/4}}\le\frac12.
\]
Monotonicity and the lower estimate in Lemma~\ref{lem:radial} show
\begin{equation}\label{eq:smooth-penalty}
 P_\rho(w(x))\ge\frac{8s_0^2}{17\rho}
 =\frac{2ap^2}{17\sqrt m}.
\end{equation}
Only the guaranteed lower radius $s_0$, not the learner's actual radius, is required to be in the quadratic regime. Combining \eqref{eq:smooth-support}--\eqref{eq:smooth-penalty} and subtracting \eqref{eq:smooth-comparator} gives
\begin{align}
 f_t(x)-f_t(x^*)&\ge\frac{a}{\sqrt{2m}}
 \left(\frac{1-p}{2}+\frac{2p^2}{17}-\frac1{32}\right)
 \ge\frac{47}{1088\sqrt{3m}}.\label{eq:smooth-phase}
\end{align}
The bracket decreases on $[0,1]$, and its value at $p=1$ is $47/544$. If $N_j=0$, take $p=0$ and use $P_\rho\ge0$; the same lower bound follows. This also handles a threshold $4h\sqrt m<1$, when every phase satisfying the threshold has no calls.

\paragraph{Summation and parameter balance.}
By \eqref{eq:smooth-balance}, $\sum_jN_j\le Q_0<q\le hm^{3/2}$. Thus fewer than $m/4$ phases have $N_j>4h\sqrt m$. At least $3m/4$ phases satisfy \eqref{eq:smooth-phase} throughout their $L$ rounds. Using $mL\ge T/2$ and the upper bound on $m$ in \eqref{eq:smooth-balance},
\begin{align}
 \Reg_T&\ge\frac{3m}{4}L\frac{47}{1088\sqrt{3m}}
 \ge\frac{141}{8704\sqrt6}T(h/q)^{1/3}
 >\frac1{160}T(h/q)^{1/3}.\label{eq:smooth-sum}
\end{align}
Moreover,
\[
 T(h/q)^{1/3}=T\min\{q^{-1/4},(\lambda/q)^{1/3}\}.
\]
Since $T$ is an integer, $m\le T$ is equivalent to $(q/h)^{2/3}\le T$, so this quantity is at least $\sqrt T$ in the present branch. This proves the desired high-probability maximum.

\paragraph{The statistical branch and zero smoothness.}
If $m>T$, the oracle-dependent term is smaller than $\sqrt T$. The fixed-cube linear construction \eqref{eq:statistical} gives the stronger high-probability threshold $\sqrt T/4$, with a common minimizer and gradient Lipschitz constant zero. It also covers $\lambda=0$ directly, where the oracle-dependent term in \eqref{eq:smooth} vanishes.

\paragraph{Fixed instances, dimension, and rescaling.}
In the phase branch, $n\le16m+1$, so the dimension is
$mk+n-1=O(m\log(mq/\delta))$, where
$m=\lceil\max\{\sqrt q,(q/\lambda)^{2/3}\}\rceil$.
In the statistical branch it is $O(T\log(T/\delta))$. In both cases the body and ordered oracle are fixed independently of the seed and the signs. The finite-support averaging argument in Proposition~\ref{prop:hp} extracts a deterministic oblivious loss sequence for each learner.

Scale the coordinates by $D$ and the losses by $GD$. Lipschitz constants scale by $G$ and gradient Lipschitz constants by $G/D$, so the normalized parameter is exactly $\lambda=\beta D/(\sqrt3G)$. The high-probability constant is $1/160$. At $\delta=1/4$, nonnegative regret gives expected constant $3/640\ge1/240$, proving the lower half of \eqref{eq:smooth}. The same construction gives \eqref{eq:smooth-hp}.
\end{proof}

\subsection{Smoothness threshold and oracle requirements}\label{sec:smooth-budget}
\begin{corollary}[Smoothness threshold]\label{cor:smooth-threshold}
Let $q_*=1+\min\{Q,BT\}$ and define
\begin{equation}\label{eq:beta-star}
 \beta_*=\frac{\sqrt3G}{D}\,q_*^{1/4}.
\end{equation}
If $\beta\ge\beta_*$, then
\begin{equation}\label{eq:smooth-matched}
 \mathcal R_{\mathrm{joint},\beta}(T,Q,B;G,D)
 =\Theta\bigl(GD\Psi(T,q_*)\bigr).
\end{equation}
When $q_*\ge T^2$, the same matching statement holds for every $\beta\ge0$, with rate $\Theta(GD\sqrt T)$. If $0\le\beta<\beta_*$, the curvature-sensitive lower bound is
\begin{equation}\label{eq:smooth-small}
 \mathcal R_{\mathrm{joint},\beta}(T,Q,B;G,D)
 \ge\frac{GD}{240}\max\!\left\{\sqrt T,
 T\left(\frac{\beta D}{\sqrt3Gq_*}\right)^{1/3}\right\}.
\end{equation}
\end{corollary}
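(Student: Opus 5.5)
The corollary follows directly from Theorem~\ref{thm:smooth} by comparing $\Psi_\lambda$ with $\Psi$. We work with $\lambda=\beta D/(\sqrt3G)$. The condition $\beta\ge\beta_*$ is then equivalent to $\lambda\ge q_*^{1/4}$.

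\emph{Matched case.} Assume $\lambda\ge q_*^{1/4}$. Then $(\lambda/q_*)^{1/3}\ge(q_*^{-3/4})^{1/3}=q_*^{-1/4}$. Hence the inner minimum in \eqref{eq:smooth-rate-definition} equals $q_*^{-1/4}$, and $\Psi_\lambda(T,q_*)=\Psi(T,q_*)$. The two sides of \eqref{eq:smooth} therefore become
\[
\frac{GD}{240}\Psi(T,q_*)\quad\text{and}\quad 6GD\Psi(T,q_*),
\]
which gives \eqref{eq:smooth-matched}.

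\emph{Large-budget case.} Assume $q_*\ge T^2$, with $\beta\ge0$ arbitrary. Then $Tq_*^{-1/4}\le\sqrt T$, so $\Psi(T,q_*)=\sqrt T$. The inner minimum in $\Psi_\lambda$ is nonnegative, and it is zero when $\lambda=0$. Therefore $\Psi_\lambda(T,q_*)\ge\sqrt T$ always. Theorem~\ref{thm:smooth} then brackets the minimax value between $GD\sqrt T/240$ and $6GD\sqrt T$.

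\emph{Small-smoothness case.} Assume $0\le\beta<\beta_*$, that is, $\lambda<q_*^{1/4}$. The reverse comparison gives $(\lambda/q_*)^{1/3}<q_*^{-1/4}$. So the minimum equals $(\lambda/q_*)^{1/3}=\bigl(\beta D/(\sqrt3Gq_*)\bigr)^{1/3}$. Substituting into the lower half of \eqref{eq:smooth} yields \eqref{eq:smooth-small}. At $\beta=0$ the second term vanishes, and the bound reduces to the statistical $\sqrt T$ term, which Theorem~\ref{thm:smooth} already covers.

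No step is a genuine obstacle. The only point needing care is the boundary bookkeeping in the large-budget case: we need $\Psi_\lambda\ge\sqrt T$ even when $\lambda=0$, so that the lower bound matches the upper bound. This holds by the definition of $\Psi_\lambda$ as a maximum that includes $\sqrt T$.
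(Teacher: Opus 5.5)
Your proposal is correct and follows the paper's argument: you identify $\beta\ge\beta_*$ with $\lambda\ge q_*^{1/4}$, determine which term attains the inner minimum in $\Psi_\lambda$, and substitute into Theorem~\ref{thm:smooth}. In the regime $q_*\ge T^2$ you use the $\sqrt T$ term contained in $\Psi_\lambda$, whereas the paper cites the statistical construction directly, but these are the same bound.
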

\begin{proof}
The two terms in the inner minimum in \eqref{eq:smooth-rate-definition} are equal exactly when $\lambda=q_*^{1/4}$, or $\beta=\beta_*$. Above that threshold the minimum equals $q_*^{-1/4}$, so Theorem~\ref{thm:smooth} has matching lower and upper orders. Below it the minimum equals $(\lambda/q_*)^{1/3}$, giving \eqref{eq:smooth-small}. If $q_*\ge T^2$, the upper rate $\Psi(T,q_*)$ is $\sqrt T$, and the statistical lower bound matches it for all $\beta$.
\end{proof}

\begin{remark}[Fixed smoothness]\label{rem:smooth-gap}
For fixed positive $\beta D/G$ and $Q_*=\Theta(T)$, Theorem~\ref{thm:smooth} gives an $\Omega(GDT^{2/3})$ lower bound and an $O(GDT^{3/4})$ upper bound. The $T^{2/3}$ upper bound of \citet{hazan2020} has additional dimension dependence, as displayed in Table~\ref{tab:comparison}. A matching upper bound uniform over dimensions in this regime would complete the curvature-sensitive characterization. At $\beta=0$, the construction yields the statistical $\sqrt T$ lower bound.
\end{remark}

\begin{corollary}[Necessary joint budgets for smooth losses]\label{cor:smooth-target}
Let $1/2\le\alpha\le1$ and $C_0>0$. A uniform guarantee
$\E\Reg_T\le C_0GDT^\alpha$ for the $\beta$-smooth class requires
\begin{equation}\label{eq:smooth-budget}
 1+\min\{Q,BT\}\ge\min\!\left\{
 \frac{T^{4(1-\alpha)}}{(240C_0)^4},
 \frac{\lambda T^{3(1-\alpha)}}{(240C_0)^3}\right\}.
\end{equation}
In particular, for fixed positive $\lambda$, a dimension-free $O(GD\sqrt T)$ guarantee requires $Q=\Omega(T^{3/2})$ and, under a finite strict cap, $B=\Omega(T^{1/2})$. If $\lambda=\Omega(\sqrt T)$, these necessary scales become $Q=\Omega(T^2)$ and $B=\Omega(T)$, which are sufficient by Theorem~\ref{thm:joint}.
\end{corollary}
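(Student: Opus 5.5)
The plan is to read Corollary~\ref{cor:smooth-target} off the lower half of Theorem~\ref{thm:smooth}, in the same way Corollary~\ref{cor:target} was obtained from Theorem~\ref{thm:joint}. Write $q_*=1+\min\{Q,BT\}$. A uniform guarantee $\E\Reg_T\le C_0GDT^\alpha$ on the $\beta$-smooth class bounds the minimax value $\mathcal R_{\mathrm{joint},\beta}$. Combined with \eqref{eq:smooth}, it gives
\[
 C_0T^\alpha\ge\frac1{240}\,T\min\bigl\{q_*^{-1/4},(\lambda/q_*)^{1/3}\bigr\}.
\]
Here I discard the $\sqrt T$ branch of $\Psi_\lambda$, which is only weaker.

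Set $c=240C_0$, so that $\min\{q_*^{-1/4},(\lambda/q_*)^{1/3}\}\le cT^{\alpha-1}$. One of the two terms must therefore be at most $cT^{\alpha-1}$, and I split into these two cases.
\begin{itemize}
 \item If $q_*^{-1/4}\le cT^{\alpha-1}$, then $q_*\ge T^{4(1-\alpha)}/c^4$.
 \item If $(\lambda/q_*)^{1/3}\le cT^{\alpha-1}$, then $q_*\ge\lambda T^{3(1-\alpha)}/c^3$.
\end{itemize}
In either case $q_*$ is at least the minimum of the two right-hand sides, which is exactly \eqref{eq:smooth-budget}. When $\lambda=0$ the second quantity is zero and the bound is vacuous, which is consistent.

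For the particular cases, take $\alpha=1/2$.
\begin{itemize}
 \item With fixed $\lambda>0$, the two quantities are of order $T^2$ and $\lambda T^{3/2}$. The minimum is therefore $\Omega(T^{3/2})$, which forces $Q+1\ge q_*=\Omega(T^{3/2})$.
 \item For a finite cap $B$, the inequality $1+BT\ge q_*$ gives $B=\Omega(T^{1/2})$. Strictly, this should read $B=\Omega(\sqrt T)$ up to the additive $1$, which is absorbed since $T^{3/2}\gg1$.
 \item If $\lambda=\Omega(\sqrt T)$, then $\lambda T^{3/2}=\Omega(T^2)$. Both terms are then $\Omega(T^2)$, giving $Q=\Omega(T^2)$ and $B=\Omega(T)$.
\end{itemize}

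Sufficiency follows from Theorem~\ref{thm:joint}, or equivalently from the upper half of \eqref{eq:smooth}, via Corollary~\ref{cor:target} at $\alpha=1/2$. The budgets $Q=\Theta(T^2)$ and $B=\Theta(T)$ make $q_*\gtrsim T^2$, so $\Psi(T,q_*)=O(\sqrt T)$ on the larger convex class and hence on the smooth class.

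There is no real obstacle in this argument. The only points needing care are the case split when inverting the minimum, and the additive $1$ in $q_*$ when translating into $Q$ and $B$ asymptotics.
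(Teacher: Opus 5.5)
Your proposal is correct and follows essentially the same route as the paper. It bounds the minimax value by the lower half of Theorem~\ref{thm:smooth} to get $\min\{q_*^{-1/4},(\lambda/q_*)^{1/3}\}\le 240C_0T^{\alpha-1}$, inverts whichever of the two terms satisfies this bound to obtain the minimum of the two thresholds, and then specializes to $\alpha=1/2$ using $q_*\le 1+Q$ and $q_*\le 1+BT$, with sufficiency taken from the convex upper bound of Theorem~\ref{thm:joint}.
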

\begin{proof}
Theorem~\ref{thm:smooth} implies
\[
 \min\{q_*^{-1/4},(\lambda/q_*)^{1/3}\}
 \le240C_0T^{\alpha-1}.
\]
At least one of the two quantities on the left is at most the right side. Rearranging the corresponding inequality gives one of the two thresholds in \eqref{eq:smooth-budget}, and hence their minimum. Because $q_*=1+\min\{Q,BT\}$, each resource must exceed this necessary effective allowance up to the additive one. Setting $\alpha=1/2$ proves the stated consequences. The general convex upper bound gives sufficiency in the last regime.
\end{proof}

\begin{remark}[Total-only and strict-only smooth bounds]\label{rem:smooth-special}
Setting $B=\infty$ in Theorem~\ref{thm:smooth} gives its total-budget counterpart with $q_*=Q+1$. Setting $Q=BT$ gives its strict counterpart with $q_*=BT+1$. Thus the same threshold and necessary-budget calculation apply to both settings.
\end{remark}

\section{Conclusion}\label{sec:conclusion}
The joint-budget characterization identifies the minimax oracle cost of dimension-free online convex optimization. Universal feasibility and recurring support costs give the lower bound, while counted updates and interleaved blocks attain it under simultaneous total and per-round caps. The smooth construction quantifies how prescribed curvature changes the lower bound and identifies the regime in which the general rate remains sharp.

\label{main-end}
\bibliography{references}
\end{document}